\newtheorem{theorem}{Theorem}[section]
\newtheorem{definition}{Definition}[section]
\newtheorem{lemma}{Lemma}[section]
\newtheorem{corollary}{Corollary}[section]
\newtheorem{assumption}{Assumption}[section]
\newcommand{\startfoo}{%
    \par\medskip
    \begin{mdframed}[linewidth=1pt]%
    \let\figure\figurehere
    \let\endfigure\endfigurehere
    \ignorespaces
}
\newcommand{\stopfoo}{%
    \unskip
    \end{mdframed}%
    \par\medskip
}
\newtheorem{remark}{Remark}
\DeclareMathOperator*{\argmin}{arg\,min}
\def\reals{{\mathcal R}}
\newcommand{\D}{{\mathcal{D}}}
\newcommand{\K}{\mathcal{K}}
\newcommand{\poly}{\mathop{\mbox{\rm poly}}}
\newcommand{\ignore}[1]{}
\def\reals{{\mathbb R}}
\def\bold0{\mathbf{0}}
\newcommand\E{\mbox{\bf E}}
\def\w{\mathbf{w}}
\def\x{\mathbf{x}}
\def\y{\mathbf{y}}
\def\z{\mathbf{z}}
\def\v{\mathbf{v}}
\newcommand{\B}{\mathbb{B}}
\newcommand{\eps}{\varepsilon}
\newcommand{\inner}[1]{\langle #1\rangle}
\newcommand{\ind}[1]{1\!\!1_{#1}}
\renewcommand{\eps}{\epsilon}
\newcommand{\G}{\mathcal{G}}
\renewcommand{\O}{O}
\newcommand{\err}{\widehat{\text{err}}}
\newcommand{\Err}{\mathcal{E}}
\title{Beyond Convexity: Stochastic \\ Quasi-Convex Optimization }
\author{%
Elad Hazan\footnote{Princeton University; \texttt{ehazan@cs.princeton.edu}.}
\and
Kfir Y. Levy\footnote{Technion; \texttt{kfiryl@tx.technion.ac.il}.}
\and 
Shai Shalev-Shwartz\footnote{The Hebrew University; \texttt{shais@cs.huji.ac.il}.}
}
\date{May 2014}                                           
\begin{document}
\maketitle
\begin{abstract} 
 Stochastic convex optimization is a basic and well studied primitive
 in machine learning.  It is well known that convex and Lipschitz
 functions can be minimized efficiently using Stochastic Gradient
 Descent (SGD).  

The Normalized Gradient Descent (NGD) algorithm, is an adaptation of Gradient Descent, which updates according to the direction of the gradients, rather than the gradients themselves.
In this paper we analyze a stochastic version of 
NGD and prove its
convergence to a \emph{global} minimum for a wider class of
functions: we  require the functions to be {\it quasi-convex} and {\it locally-Lipschitz}. 
Quasi-convexity broadens the concept of
unimodality to multidimensions and allows for certain types of
saddle points, which are a known hurdle for first-order optimization
methods such as gradient descent. Locally-Lipschitz functions are only required  to be Lipschitz in a small region
around the optimum. This assumption circumvents gradient explosion, which is another known hurdle for  gradient descent variants. 

 Interestingly, unlike the
vanilla SGD algorithm, the stochastic normalized gradient descent algorithm provably requires a
minimal minibatch size.
\end{abstract} 

\section{Introduction}
\label{section:intro}
The benefits of using the Stochastic Gradient Descent (SGD) scheme for learning could not
be stressed enough. For convex and Lipschitz objectives, SGD is
guaranteed to find an $\eps$-optimal solution within $O(1/\eps^2)$ iterations and requires only an unbiased estimator for the gradient, which is obtained with only one (or  a few) data samples. 
However, when applied to non-convex problems
several drawbacks are revealed. In particular, SGD is widely used
for deep learning \cite{bengio2009learning},  one
of the most interesting fields where stochastic non-convex
optimization problems arise.  Often, the objective in
these kind of problems demonstrates two extreme phenomena: on the one
hand plateaus, i.e., regions with very small gradients; and on the
other hand sharp cliffs, i.e., exceedingly high gradients.  As is
expected, applying SGD to these problems is often reported to yield
unsatisfactory results.

In this paper we analyze a stochastic version of the Normalized
Gradient Descent (NGD) algorithm, which we denote by SNGD. Each iteration of SNGD is
as simple and efficient as SGD, but is much more appropriate for
non-convex optimization problems, overcoming some of the pitfalls
that SGD may encounter.  Particularly, we define a family of
\emph{locally-quasi-convex} and \emph{locally-Lipschitz} functions, and prove
that SNGD is suitable for optimizing such objectives.

Local-Quasi-convexity
is a generalization of unimodal functions to multidimensions, which
includes quasi-convex, and  convex functions as a subclass. 
Locally-Quasi-convex  functions allow for certain types of plateaus and saddle points which are difficult for SGD and other gradient descent variants. 
Local-Lipschitzness is a generalization of
Lipschitz functions that only assumes Lipschitzness in a small region
around the minima, whereas farther away the gradients may be unbounded. Gradient explosion is, thus, another difficulty that is successfully tackled by SNGD and poses difficulties for other stochastic gradient descent variants. 

Our contributions:

\begin{itemize}
\item We introduce \emph{local-quasi-convexity}, a property that extends 
quasi-convexity and captures unimodal functions which are not quasi-convex. 
We prove that NGD finds an $\eps$-optimal minimum for such functions within  $\O(1/\eps^2)$ iterations.
As a special case, we show that the above rate can be attained for quasi-convex functions that are Lipschitz in an
  $\Omega(\eps)$-region around the optimum (gradients may be
  \emph{unbounded} outside this region). For objectives that are also smooth in an $\Omega(\sqrt{\eps})$-region around the optimum,
  we  prove a faster rate of $\O(1/\eps)$.

\item We introduce a new setup: stochastic optimization of locally-quasi-convex functions;
and show that this setup captures Generalized Linear Models (GLM) regression, \cite{GLM}.
For this setup, we devise a stochastic version of NGD (SNGD), and show that it converges within 
 $\O(1/\eps^2)$ iterations to an $\eps$-optimal minimum. 

%

\item The above positive result requires that at each iteration of
  SNGD, the gradient should be estimated using a minibatch of a
  minimal size. We provide a negative result showing that if the
  minibatch size is too small then the algorithm might indeed diverge.

%

\item We report experimental results supporting our theoretical
  guarantees and demonstrate an accelerated convergence attained by SNGD. 

\end{itemize}

\subsection{Related Work}

Quasi-convex optimization problems arise in numerous fields, spanning
economics \cite{varian1985price, laffont2009theory}, industrial
organization \cite{wolfstetter1999topics} , and computer vision
\cite{ke2007quasiconvex}.
It is well known that quasi-convex optimization tasks can be solved
by a series of convex feasibility problems \cite{boyd2004convex};
However, generally solving such feasibility problems may be very
costly \cite{goffin1996complexity}.

There exists a rich literature concerning quasi-convex optimization in the offline case, 
\cite{polyak1967general,zabotin1972minimization,khabibullin1977method,sikorski1986quasi}.
A pioneering paper by \cite{nesterov1984minimization}, was the first
to suggest an efficient algorithm, namely Normalized Gradient Descent,
and prove that this algorithm attains $\eps$-optimal solution within
$O(1/\eps^2)$ iterations given a differentiable quasi-convex objective.
This work was later extended by \cite{kiwiel2001convergence}, showing
that the same result may be achieved assuming upper semi-continuous
quasi-convex objectives. In \cite{konnov2003convergence} it was shown
how to attain faster rates for quasi-convex optimization, but they
assume to know the optimal value of the objective, an assumption that generally
does not hold in practice.

Among the deep learning community there have been several attempts to tackle 
gradient-explosion/plateaus.
Ideas spanning gradient-clipping \cite{pascanu2013difficulty}, smart initialization \cite{doya1993bifurcations}, and more, \cite{martens2011learning}, have shown to improve training in practice.
Yet, non of these works provides a theoretical analysis showing better convergence guarantees.

To the best of our knowledge, there are no previous results on
stochastic versions of NGD, neither results regarding locally-quasi-convex/locally-Lipschitz functions.


%
Gradient descent with fixed step sizes, including its stochastic variants, is known to perform poorly when the gradients are too small in a plateau area of the function, or alternatively when the other extreme happens: gradient explosions. These two phenomena have been reported in certain types of non-convex optimization, such as training of deep networks. 

Figure \ref{figure:cliffs} depicts a one-dimensional family of functions for which GD behaves provably poorly. With a large step-size, GD will hit the cliffs and then oscillate between the two boundaries. Alternatively, with a small step size, the low gradients will cause GD to miss the middle valley which has constant size of ${1}/{2}$. 
On the other hand, this exact function is quasi-convex and locally-Lipschitz, and hence the NGD algorithm provably converges to the optimum quickly.

%
\begin{figure}[t] \label{comp1}
	\begin{center}
		\includegraphics[width=4in]{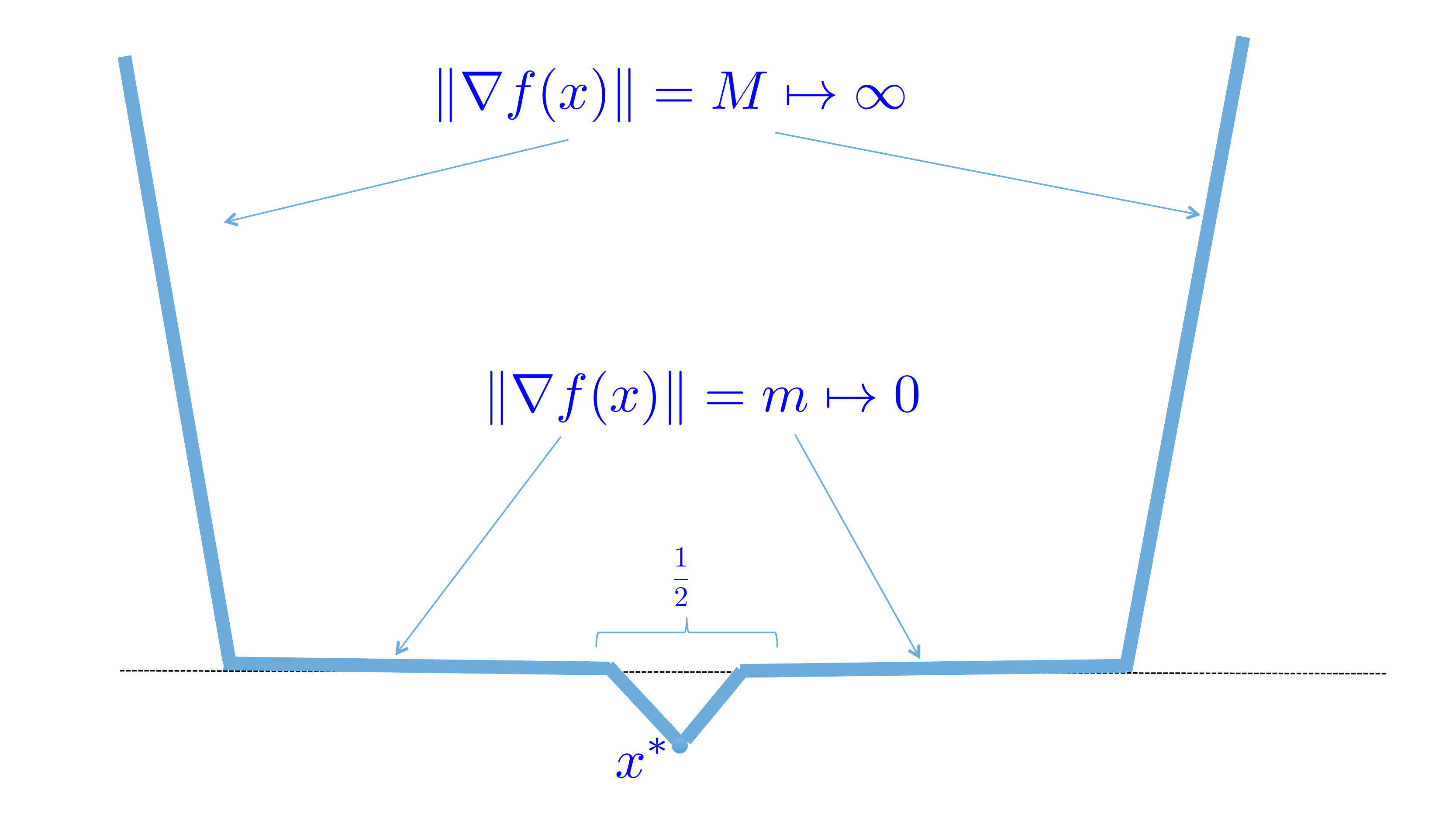}
	\end{center}
	\caption{A quasi-convex Locally-Lipschitz function with plateaus and cliffs.}  \label{figure:cliffs}
\end{figure}

\section{Definitions and Notations}
We use $\|\cdot\|$ to denote the Euclidean norm.
$\B_d(\x,r)$  denotes the $d$ dimensional Euclidean ball of radius $r$, centered around
$\x$, and $\B_d: =\B_d(0,1)$. $[N]$  denotes the set $\{1,\ldots,N \}$.

For simplicity, throughout the paper we always assume that functions are
\emph{differentiable} (but if not stated explicitly, we do not assume any
bound on the norm of the gradients). 
%

\begin{definition}\textbf{(Local-Lipschitzness and Local-Smoothness)}
\label{def:localSmoothness}
  Let $\z \in\reals^d$, $G,\eps\geq 0$. A function $f: \K \mapsto \reals$ is called $(G,\eps,\z)$-Locally-Lipschitz if 
  for every   $\x,\y \in \B_d(\z,\eps)$,  we have
   $$|f(\x)-f(\y)| \le G\|\x-\y\|~.$$
Similarly, the function is $(\beta,\eps,\z)$-locally-smooth if
  for every  $\ \x,\y \in \B_d(\z,\eps)$ we have,
   $$|f(\y)-f(\x)-\inner{\nabla f(\y),\x-\y} |\leq \frac{\beta}{2} \|\x-\y\|^2~.$$
\end{definition}

Next we define quasi-convex functions:
\begin{definition}\textbf{(Quasi-Convexity)} \label{def:qc1}
We say that a function $f: \reals^d \mapsto \reals$ is quasi-convex  if  $\forall \x,\y\in\reals^d$, such that $f(\y)\leq f(\x)$, it follows that
$$\inner{\nabla f(\x), \y-\x}\leq 0~.$$ 
We further say that $f$ is strictly-quasi-convex, if it is quasi-convex and its gradients vanish only at the global minima, i.e., $\forall \y:~ f(\y)>\min_{\x\in \reals^d}f(\x)  ~\Rightarrow~ \|\nabla f(\y)\|>0 $.
\end{definition}
Informally, the above characterization  states that the (opposite) gradient of a quasi-convex function directs us in a \emph{global} descent direction.
Following is an equivalent  (more common) definition: 
\begin{definition}\textbf{(Quasi-Convexity)}\label{def:qc2}
We say that a function $f: \reals^d \mapsto \reals$ is quasi-convex  if any $\alpha$-sublevel-set of $f$ is convex, i.e., $\forall \alpha\in \reals$
the set 
$$\mathcal{L}_{\alpha}(f) =\{ \x: f(\x)\leq \alpha\}\quad \text{is convex.}$$ 
\end{definition}
The equivalence between the above definitions can be found in \cite{boyd2004convex}, for completeness we provide a proof in Appendix~\ref{apdx:defEquiv}.
During this paper we  denote the sublevel-set of $f$ at $\x$ by 
\begin{align}
\label{equation:sublevelset}
S_f(\x) = \{\y : f(\y) \le f(\x)\}~. 
\end{align}

\section{Local-Quasi-Convexity} \label{sec:LocalQC}
Quasi-convexity does not fully capture the notion of unimodality in several dimension. As an example let $\x = (x_1,x_2)\in[-10,10]^{2}$,  and consider the function
\begin{equation}\label{eq:FuncExample}
g(\x)=(1+e^{-x_1})^{-1}+(1+e^{-x_2})^{-1}~.
\end{equation}
It is natural to consider $g$ as   unimodal since it acquires no local minima but for the unique global minima at $\x^* = (-10,-10)$. 
However, $g$ is not quasi-convex:  consider the points $\x = (\log16, -\log4), \y= ( -\log4,\log16)$, which belong to the $1.2$-sub-level set, 
 their average does not belong to the same sub-level-set since $g(\x/2+\y/2) = 4/3$.

Quasi-convex functions always enable us to \emph{explore}, meaning that the gradient 
always directs us in a global descent direction. Intuitively, from an optimization point of view, we only need such a direction whenever 
we do not \emph{exploit}, i.e., whenever we are not approximately optimal.

In what follows we define local-quasi-convexity, a property that enables us to either \emph{explore/exploit}. This property
\footnote{Definition~\ref{def:LocalQC} can be generalized in a manner that captures a broader range of scenarions (e.g. the Perceptron problem), we defer this definition to Appendix~\ref{app:def:LocalQC}.}
 captures a wider class of unimodal function (such as $g$ above) rather than mere quasi-convexity.  Later we justify this definition by showing that it captures Generalized Linear Models (GLM) regression, see \cite{GLM,Kalai}.

\begin{definition}\textbf{(Local-Quasi-Convexity)}\label{def:LocalQC}
Let $\x,\z\in \reals^d$,  $\kappa,\eps>0$. 
 We say that  $f: \reals^d \mapsto \reals$ is  $(\eps,\kappa,\z)$-Strictly-Locally-Quasi-Convex (SLQC) in $\x$, if at least one of the following applies:
\begin{enumerate}
\item $f(\x)- f(\z)\leq  \eps~.$
\item $\|\nabla f(\x)\|> 0$, and for every $\y\in\B(\z,\eps/\kappa)$ it holds that $\inner{\nabla f(\x), \y-\x}\leq 0~.$
\end{enumerate}
\end{definition}
Note  that if $f$ is  $G$-Lispschitz and strictly-quasi-convex function, then $\forall \x,\z\in \reals^d,\; \forall \eps> 0$, it holds that $f$ is $(\eps,G,\z)$-SLQC in $\x$.
Recalling the function $g$ that appears in Equation~\eqref{eq:FuncExample}, then it can be shown that $\forall \eps\in(0,1], \forall \x  \in[-10,10]^2$ then this function is $(\eps,1,\x^*)$-SLQC in  $\x$, where $\x^* = (-10,-10)$ (see Appendix~\ref{app:localQuasi_g}).
\subsection{Generalized Linear Models (GLM)}
\subsubsection{The Idealized GLM}\label{sec:IdealizedPercept}
In this setup we have a collection of $m$ samples $\{(\x_i,y_i) \}_{i=1}^m \in \B_d\times [0,1]$, and an activation function $\phi:\reals \mapsto\reals$.  We are guaranteed to
have $\w^* \in \reals^d$ such that: $ y_i = \phi\inner{\w^*,\x_i},\; \forall i\in[m]$ (we denote $\phi\inner{\w,\x}:=\phi(\inner{\w,\x})$).
The performance of a predictor $\w\in\reals^d$, is measured  by the average square error over all samples.
\begin{align}\label{eq:ErrPerceptron}
\err_m(\w) = \frac{1}{m}\sum_{i=1}^m \left(y_i-\phi\inner{\w,\x_i}\right)^2~. 
\end{align}
 In \cite{Kalai}  it is shown that the Perceptron problem with $\gamma$-margin is a  private case of GLM regression.

The sigmoid function $\phi(z) = (1+e^{-z})^{-1}$ is a popular activation function in the field of deep learning.  
The next lemma states that in the  idealized GLM problem with sigmoid activation,  then the error function is SLQC (but not quasi-convex). As we will see in Section \ref{sec:Setup_NGD} this implies that Algorithm~\ref{algorithm:NGD} finds an $\eps$-optimal minima of $\err_m(\w)$ within $\poly(1/\eps)$ iterations. 
\begin{lemma}\label{lem:IdealizedPercept}
Consider the idealized GLM problem with the sigmoid activation,  and assume that  $\|\w^*\|\leq W$. Then the error function appearing in Equation~\eqref{eq:ErrPerceptron} is $(\eps,e^W,\w^* )$-SLQC in $\w$, $\forall \eps>0,\;\forall \w \in\B_d(0,W)$ (But  it is not generally quasi-convex).
\end{lemma}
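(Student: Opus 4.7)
The plan is to verify the SLQC disjunction at an arbitrary $\w\in\B_d(0,W)$ by direct computation. Since $y_i=\phi\langle\w^*,\x_i\rangle$ we have $\err_m(\w^*)=0$, so the first branch of Definition~\ref{def:LocalQC} reduces to $\err_m(\w)\le\eps$. If this fails I need to show that $\langle\nabla\err_m(\w),\v-\w\rangle\le 0$ and $\nabla\err_m(\w)\neq 0$ for every $\v\in\B_d(\w^*,\eps/e^W)$.

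The key technical move is to rewrite each residual $u_i:=y_i-\phi\langle\w,\x_i\rangle$ using the fundamental theorem of calculus along the segment from $\w$ to $\w^*$, so that $u_i = \psi_i\langle\w^*-\w,\x_i\rangle$ with $\psi_i:=\int_0^1\phi'\langle\w+t(\w^*-\w),\x_i\rangle\,dt>0$. Setting $g_i := \phi'\langle \w,\x_i\rangle>0$, the gradient takes the form $\nabla\err_m(\w) = -\tfrac{2}{m}\sum_i u_i g_i \x_i$, and splitting $\v-\w = (\w^*-\w)+(\v-\w^*)$ gives
$$\langle \nabla\err_m(\w),\v-\w\rangle = -\frac{2A}{m} + \langle \nabla\err_m(\w),\v-\w^*\rangle,$$
where $A := \sum_i \psi_i g_i \langle\w^*-\w,\x_i\rangle^2\ge 0$. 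It thus suffices to show that the ``good'' term $2A/m$ dominates the ``bad'' term, which Cauchy-Schwarz bounds by $\|\nabla\err_m(\w)\|\cdot \eps/e^W$ in absolute value.

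The analytic input is the bound $\phi'(z)\in[e^{-W}/4,\,1/4]$ for $z\in[-W,W]$, which applies to both $\psi_i$ and $g_i$ since $|\langle\w,\x_i\rangle|,|\langle\w^*,\x_i\rangle|\le W$ and $\|\x_i\|\le1$. Using $u_i^2=\psi_i^2\langle\w^*-\w,\x_i\rangle^2$ together with $\psi_i\le 1/4$ yields $\psi_i \langle\w^*-\w,\x_i\rangle^2 \ge 4u_i^2$, and then the lower bound $g_i\ge e^{-W}/4$ gives $A\ge e^{-W}\,m\,\err_m(\w)$. A second Cauchy-Schwarz on $\|\nabla\err_m(\w)\|\le\tfrac{2}{m}\sqrt{\sum_iu_i^2}\sqrt{\sum_ig_i^2}$ produces $\|\nabla\err_m(\w)\|\le\tfrac12\sqrt{\err_m(\w)}$. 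Substituting reduces the target inequality to $4\sqrt{\err_m(\w)}\ge\eps$, which is immediate once $\err_m(\w)>\eps$ and $\eps\le 1$; the range $\eps>1$ is vacuous because $\err_m(\w)\le 1$. The non-vanishing $\|\nabla\err_m(\w)\|>0$ falls out automatically from $\|\nabla\err_m(\w)\|\cdot\|\w^*-\w\|\ge |\langle\nabla\err_m(\w),\w^*-\w\rangle|=2A/m>0$.

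The main obstacle is calibrating constants so that the exponential smallness of $\phi'$ on $[-W,W]$ (which shrinks $A$) is balanced against the radius $\eps/e^W$ of the SLQC ball (which inflates the bad term); the factors $e^{-W}$ and $e^{W}$ will cancel precisely, which is what pins $\kappa=e^W$. The parenthetical non-quasi-convexity assertion can be handled separately by producing a small explicit sample set whose error surface contains two points violating Definition~\ref{def:qc1}, in direct analogy with the function $g$ of Eq.~\eqref{eq:FuncExample}.
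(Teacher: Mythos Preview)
Your proof is correct and follows essentially the same route as the paper: both split $\v-\w=(\w^*-\w)+(\v-\w^*)$, use the bound $\phi'\in[e^{-W}/4,\,1/4]$ on $[-W,W]$ to lower-bound the ``good'' term by $2e^{-W}\err_m(\w)$, and control the ``bad'' term with the radius $\eps e^{-W}$ so that the $e^{\pm W}$ factors cancel. The only cosmetic difference is packaging: you invoke the FTC representation $u_i=\psi_i\langle\w^*-\w,\x_i\rangle$ and Cauchy--Schwarz on $\|\nabla\err_m(\w)\|$, whereas the paper writes the equivalent inequality $(\phi(z)-\phi(z'))(z-z')\ge 4(\phi(z)-\phi(z'))^2$ and bounds the bad term term-by-term via $|u_i|\le 1$, $g_i\le 1/4$, arriving at the slightly cleaner condition $\err_m(\w)>\eps/4$ (and the paper supplies the explicit two-sample counterexample for the non-quasi-convexity claim).
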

We defer the proof to Appendix~\ref{app:Proof_lem:IdealizedPercept}

\subsubsection{The Noisy GLM}\label{sec:IdealizedPerceptNoisy}
In the noisy GLM setup (see~\cite{GLM,Kalai}), we may draw i.i.d. samples $\{(\x_i,y_i)\}_{i=1}^m \in \B_d\times [0,1]$, from an unknown distribution $\D$. We assume that  there exists a predictor $\w^*\in\reals^d$ such that
$\E_{(\x,y)\sim\D}[y\vert \x] = \phi\inner{\w^*,\x}$, where $\phi$ is an activation function. Given $\w\in \reals^d$ we define its expected error as follows:
$$\Err(\w) = \E_{(\x,y)\sim\D}(y -\phi\inner{\w,\x})^2 ~,$$
and it can be shown that  $\w^*$ is a global minima of $\Err$.
We are interested in schemes that  obtain an $\eps$-optimal minima to $\Err$, within $\text{poly}(1/\eps)$ samples and optimization steps.
Given $m$ samples from $\D$, their empirical error $\err_m(\w)$,  is defined as in Equation~\eqref{eq:ErrPerceptron}.

The following lemma states that in this setup, letting $m= \Omega(1/\eps^2)$, then $\err_m$ is SLQC with high probability. This property will enable us to apply Algorithm~\ref{algorithm:SNGD}, to obtain an $\eps$-optimal minima to $\Err$,  within $\text{poly}(1/\eps)$ samples from $\D$, and $\text{poly}(1/\eps)$ optimization steps. 

\begin{lemma}\label{lem:IdealizedPerceptNoisy}
Let $\delta,\eps\in (0, 1)$. Consider the noisy GLM problem with the sigmoid activation, and assume that  $\|\w^*\|\leq W$. 
Given a  fixed point $\w\in \B(0,W)$, then w.p.$\geq 1-\delta$, after $m\geq \frac{8e^{2W}(W+1)^2}{\eps^2} \log(1/\delta)$ samples,  the empirical error function appearing in Equation~\eqref{eq:ErrPerceptron} is $(\eps,e^W,w^* )$-SLQC in $\w$.
\end{lemma}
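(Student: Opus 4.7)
The plan is to reduce to the idealized case by introducing the surrogate
$$\err_m^{\mathrm{id}}(\w) := \frac{1}{m}\sum_{i=1}^m \bigl(\phi\inner{\w^*,\x_i} - \phi\inner{\w,\x_i}\bigr)^2,$$
which is exactly the empirical error in the \emph{idealized} GLM setup with labels $\tilde y_i = \phi\inner{\w^*,\x_i}$; Lemma~\ref{lem:IdealizedPercept} then already gives $(\eps,e^W,\w^*)$-SLQC of $\err_m^{\mathrm{id}}$ at $\w$. Collect the noise into residuals $\xi_i := y_i - \phi\inner{\w^*,\x_i}$, for which $\E[\xi_i\mid \x_i]=0$ and $|\xi_i|\le 1$.

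Writing $y_i=\phi\inner{\w^*,\x_i}+\xi_i$ and expanding the square yields the two identities
\begin{align*}
\err_m(\w)-\err_m(\w^*) &= \err_m^{\mathrm{id}}(\w) + \frac{2}{m}\sum_{i=1}^m \xi_i\bigl(\phi\inner{\w^*,\x_i}-\phi\inner{\w,\x_i}\bigr),\\
\nabla\err_m(\w)-\nabla\err_m^{\mathrm{id}}(\w) &= -\frac{2}{m}\sum_{i=1}^m \xi_i\,\phi'\inner{\w,\x_i}\,\x_i.
\end{align*}
Both right-hand sides are averages of conditionally zero-mean random variables, bounded by $1$ in absolute value and by $1/4$ in Euclidean norm respectively (using $\phi'\le 1/4$, $\|\x_i\|\le 1$, $|\xi_i|\le 1$). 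Scalar Hoeffding and a dimension-free vector Hoeffding bound then give, with probability $\ge 1-\delta$,
$$\Delta_1 := \bigl|\err_m(\w)-\err_m(\w^*)-\err_m^{\mathrm{id}}(\w)\bigr| \le c_1\sqrt{\tfrac{\log(1/\delta)}{m}},\qquad \Delta_2 := \bigl\|\nabla\err_m(\w)-\nabla\err_m^{\mathrm{id}}(\w)\bigr\| \le c_2\sqrt{\tfrac{\log(1/\delta)}{m}}$$
for absolute constants $c_1,c_2$.

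Next I would case-split on $\err_m(\w)-\err_m(\w^*)$. If this quantity is $\le\eps$ then SLQC condition (1) holds and we are done. Otherwise, on the concentration event $\err_m^{\mathrm{id}}(\w)\ge \eps-\Delta_1\ge \eps/2$ as soon as $m = \Omega(\log(1/\delta)/\eps^2)$. Reopening the proof of Lemma~\ref{lem:IdealizedPercept} and retaining its estimate quantitatively (rather than just the $\le 0$ conclusion) shows that for any $\y\in\B_d(\w^*,\eps/e^W)$,
$$-\inner{\nabla\err_m^{\mathrm{id}}(\w),\y-\w}\;\ge\;2e^{-W}\err_m^{\mathrm{id}}(\w) - \tfrac12\cdot\tfrac{\eps}{e^W}\;\ge\;\frac{e^{-W}\eps}{2},$$
where the first inequality applies the sandwich $e^{-W}/4\le\phi'(z)\le 1/4$ on $[-W,W]$ to the two pieces obtained by splitting $\y-\w=(\w^*-\w)+(\y-\w^*)$, and the second uses $\err_m^{\mathrm{id}}(\w)\ge\eps/2$. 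Cauchy--Schwarz together with the crude bound $\|\y-\w\|\le \eps/e^W+2W\le 2(W+1)$ then gives
$$\bigl|\inner{\nabla\err_m(\w)-\nabla\err_m^{\mathrm{id}}(\w),\y-\w}\bigr|\;\le\;2(W+1)\Delta_2.$$
The sample lower bound $m\ge \frac{8e^{2W}(W+1)^2}{\eps^2}\log(1/\delta)$ renders this strictly below $e^{-W}\eps/2$; combining the two displayed estimates yields $-\inner{\nabla\err_m(\w),\y-\w} > 0$ for every $\y$ in the ball, which simultaneously supplies the angular condition and the nonvanishing gradient demanded by SLQC condition (2).

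The main obstacle is not the concentration (routine Hoeffding plus its dimension-free vector analogue) but the need to extract a \emph{quantitative} margin from Lemma~\ref{lem:IdealizedPercept}: its SLQC conclusion is only a non-strict inequality, whereas a positive slack of order $e^{-W}\eps$ is necessary in order to absorb the gradient noise $\Delta_2$. Once that slack is exhibited inside the idealized argument, the rest of the proof consists only of the two algebraic identities displayed above and the case split.
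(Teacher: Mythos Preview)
Your proposal is correct and follows the same overall strategy as the paper: introduce residuals $\xi_i = y_i - \phi\inner{\w^*,\x_i}$, decompose $\err_m$ into the idealized error plus noise, reopen the idealized computation to extract a quantitative margin of order $e^{-W}\eps$, and concentrate the noise by Hoeffding.

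The one technical difference is in the concentration step. You bound $\bigl\|\nabla\err_m(\w)-\nabla\err_m^{\mathrm{id}}(\w)\bigr\|$ via a dimension-free vector Hoeffding inequality and then apply Cauchy--Schwarz against $\|\y-\w\|\le 2(W+1)$. The paper instead works directly with the scalar $\inner{\nabla\err_m(\w),\w-\v}$: after splitting $\w-\v=(\w-\w^*)+(\w^*-\v)$, the $\v$-dependent noise cross-term is bounded \emph{deterministically} by $\tfrac{1}{2}\eps e^{-W}$ (using $|\xi_i|\le 1$, $\phi'\le 1/4$, $|\inner{\w^*-\v,\x_i}|\le \eps e^{-W}$), and what remains is a single $\v$-independent scalar $\frac{1}{m}\sum_i \xi_i\tilde\lambda_i(\w)$ with $|\xi_i\tilde\lambda_i(\w)|\le 4(W+1)$, to which one application of \emph{scalar} Hoeffding suffices. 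The paper's route is thus slightly more elementary and is what produces the explicit constant $8$ in the sample bound; with your route the constant depends on whichever vector concentration inequality you invoke, so the specific value $8$ is not actually pinned down by your argument as written, though the $\Theta\!\left(e^{2W}(W+1)^2\eps^{-2}\log(1/\delta)\right)$ order certainly is.
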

Note that if we had  required the SLQC to hold $\forall \w\in \B(0,W)$, then we would need the number of samples to depend on the dimension, $d$, which we would like to avoid. 
Instead, we require SLQC to hold for a fixed $\w$. This satisfies the conditions of Algorithm~\ref{algorithm:SNGD}, enabling us to find an $\eps$-optimal solution with a sample complexity that is independent of the dimension. 
We defer the proof of Lemma~\ref{lem:IdealizedPerceptNoisy}  to Appendix~\ref{app:Proof_lem:IdealizedPerceptNoisy}

\section{NGD for Locally-Quasi-Convex Optimization}\label{sec:Setup_NGD}
\begin{algorithm}[h]
\caption{Normalized Gradient Descent (NGD) }
\label{algorithm:NGD}
\begin{algorithmic}
\STATE \textbf{Input}: \#Iterations $T$, $\x_1\in \reals^d$, learning rate $\eta$
\FOR{$t=1 \ldots T$ }
\STATE {Update:}
$$\x_{t+1}= \x_{t}-\eta \hat{g}_{t}\; \text{ where }  g_t= \nabla f(\x_t),\; \hat{g}_t = \frac{g_t}{\|g_t\|}$$
\ENDFOR
\STATE \textbf{Return:}
$\bar{\x}_T = \argmin_{\{\x_1,\ldots,\x_T\} }f(\x_t)$
\end{algorithmic}
\end{algorithm}
Here we present the NGD algorithm, and prove the convergence rate of this algorithm for SLQC  objectives. 
Our analysis is simple, enabling us  to extend the convergence rate presented in \cite{nesterov1984minimization} beyond quasi-convex functions.
We then show that quasi-convex and \emph{locally-Lipschitz} objective are SLQC, implying that NGD converges even if the gradients are unbounded outside a small region around the minima.   
For quasi-convex  and \emph{locally-smooth} objectives, we show that NGD attains a faster convergence rate.

NGD is presented in Algorithm~\ref{algorithm:NGD}.  NGD
is similar to GD, except we normalize the gradients. It is
intuitively clear that to obtain robustness to plateaus (where the
gradient can be arbitrarily small) and to exploding gradients (where
the gradient can be arbitrarily large), one must ignore the size of
the gradient. It is more surprising that the information in the
direction of the gradient suffices to guarantee convergence.

Following is the main theorem of this section:
\begin{theorem}
\label{theorem:NGD}
Fix $\eps>0$,
 let $f:\reals^d\mapsto \reals$, and $\x^*\in \arg\min_{\x\in\reals^d}f(\x)$.
Given that $f$ is $(\eps,\kappa,\x^*)$-SLQC in every $\x \in \reals^d$.
 Then running the NGD algorithm with $T \geq {\kappa^2\|\x_1 - \x^*\|^2}/{\eps^2}$, and $\eta=\eps/\kappa$, we have that:
 $f(\bar{\x}_T)-f(\x^*) \le   \eps$.
\end{theorem}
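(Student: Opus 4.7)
The plan is a standard "potential decrease" argument applied to $\Phi_t := \|\x_t - \x^*\|^2$, but the key step is to convert SLQC into a quantitative lower bound on $\inner{\hat{g}_t, \x_t - \x^*}$. I will proceed by contradiction: assume that $f(\x_t) - f(\x^*) > \eps$ for every $t \in \{1,\dots,T\}$, and then show $\Phi_t$ shrinks by a fixed amount each iteration, which is incompatible with $\Phi_t \geq 0$ once $T$ is as large as in the statement.

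Under the contradiction hypothesis, condition~1 of SLQC fails at each $\x_t$, so condition~2 must hold: $\|\nabla f(\x_t)\| > 0$ (which makes $\hat{g}_t$ well-defined) and $\inner{\nabla f(\x_t), \y - \x_t} \leq 0$ for every $\y \in \B_d(\x^*, \eps/\kappa)$. The trick is to instantiate $\y$ cleverly. Since the inequality is homogeneous under rescaling the gradient, it holds equivalently for $\hat{g}_t$ in place of $\nabla f(\x_t)$. Choosing $\y = \x^* + (\eps/\kappa)\,\hat{g}_t$, which lies in $\B_d(\x^*, \eps/\kappa)$, gives
\begin{equation*}
0 \;\geq\; \inner{\hat{g}_t,\, \x^* + (\eps/\kappa)\hat{g}_t - \x_t} \;=\; \inner{\hat{g}_t,\, \x^* - \x_t} + \eps/\kappa,
\end{equation*}
so $\inner{\hat{g}_t, \x_t - \x^*} \geq \eps/\kappa$. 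This is the only place SLQC is used.

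Now expand the potential using the update rule and $\|\hat{g}_t\| = 1$:
\begin{equation*}
\|\x_{t+1} - \x^*\|^2 \;=\; \|\x_t - \x^*\|^2 \;-\; 2\eta\,\inner{\hat{g}_t, \x_t - \x^*} \;+\; \eta^2 \;\leq\; \|\x_t - \x^*\|^2 \;-\; 2\eta(\eps/\kappa) \;+\; \eta^2.
\end{equation*}
Plugging in the chosen $\eta = \eps/\kappa$, the right-hand side equals $\|\x_t - \x^*\|^2 - (\eps/\kappa)^2$. Telescoping from $t=1$ to $T$ and using nonnegativity of the left side yields $T \cdot (\eps/\kappa)^2 \leq \|\x_1 - \x^*\|^2$, i.e., $T \leq \kappa^2 \|\x_1 - \x^*\|^2 / \eps^2$, which contradicts the hypothesized lower bound on $T$ (strictly, one handles the boundary case by noting there must then exist some iterate satisfying condition~1). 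Hence some $\x_t$ achieves $f(\x_t) - f(\x^*) \leq \eps$, and by definition $\bar{\x}_T$ does too.

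I do not expect a serious obstacle: the only nontrivial step is extracting the quantitative margin $\eps/\kappa$ from the SLQC definition by picking $\y = \x^* + (\eps/\kappa)\hat{g}_t$, and once that is in hand the remainder is the textbook projected/normalized descent potential argument with the constant step $\eta = \eps/\kappa$ tuned to balance the linear decrease against the $\eta^2$ term.
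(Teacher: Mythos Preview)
Your proposal is correct and essentially identical to the paper's own proof: both argue by contradiction, use the same key instantiation $\y = \x^* + (\eps/\kappa)\hat{g}_t$ to extract the margin $\inner{\hat{g}_t,\x_t-\x^*}\ge \eps/\kappa$ from condition~2 of SLQC, and then run the standard potential-decrease/telescoping argument with $\eta=\eps/\kappa$. The only cosmetic difference is that the paper first separates out the case where $\nabla f(\x_t)=0$, whereas you (equivalently) fold it into the failure of condition~1 forcing condition~2.
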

Theorem~\ref{theorem:NGD} states that $(\cdot,\cdot,\x^*)$-SLQC functions admit $\text{poly}(1/\eps)$  convergence rate using NGD. The intuition
behind this lies in Definition~\ref{def:LocalQC}, which asserts that at a point $\x$ either the (opposite) gradient points out
a global optimization direction, or we are already $\eps$-optimal. Note that the requirement of $(\eps,\cdot,\cdot)$-SLQC in any $\x$ is not restrictive, 
as we have seen in Section~\ref{sec:LocalQC}, there are interesting examples of  functions that admit this property  $\forall \eps \in [0,1]$, and for any $\x$.

For simplicity we have presented NGD for unconstrained problems. Using projections we can 
easily extend the algorithm  and its analysis for constrained optimization over convex sets. 
This will enable to achieve convergence of  $O(1/\eps^2)$ for the objective presented in Equation~\eqref{eq:FuncExample}, and the idealized GLM problem 
presented in Section~\ref{sec:IdealizedPercept}.

We are now ready to prove Theorem~\ref{theorem:NGD}:
\begin{proof}[Proof of Theorem~\ref{theorem:NGD}]
First note that if the gradient of $f$ vanishes at $\x_t$, then by
the SLQC assumption we must have that $f(\x_t)-f(\x^*)\le \eps$.
 Assume next that we perform $T$ iterations and the gradient of $f$ at $\x_t$
never vanishes in these iterations. 
Consider the update rule of NGD (Algorithm~\ref{algorithm:NGD}), then by standard algebra we get,
\[
\|\x_{t+1}-\x^*\|^2 = \|\x_t - \x^*\|^2  - 2 \eta
\inner{\hat{g}_t,\x_t-\x^*} + \eta^2~.
\]
Assume that  $\forall t\in[T]$ we have $f(\x_t)-f(\x^*) >
\eps$. Take
$\y = \x^* + \left(\eps/\kappa\right) \hat{g}_t$, 
 and observe that $\|\y-\x^*\| \le \eps/\kappa$. 
 The $(\eps,\kappa,\x^*)$-SLQC assumption implies that $\inner{\hat{g}_t,\y-\x_t} \le 0 $, and therefore 
\[
\inner{\hat{g}_t,\x^* + \left(\eps/\kappa \right) \hat{g}_t-\x_t} \le 0 
~\Rightarrow~ \inner{\hat{g}_t,\x_t - \x^*} \ge \eps/\kappa~.
\]
Setting  $\eta = \eps/\kappa$, the above implies,
\begin{align*}
\|\x_{t+1}-\x^*\|^2 &\le \|\x_t - \x^*\|^2  - 2 \eta
\eps/\kappa + \eta^2 \\
&= \|\x_t - \x^*\|^2  - \eps^2/\kappa^2~.
\end{align*}
Thus, after $T$ iterations for which $f(\x_t)-f(\x^*) > \eps$ we get 
\[
0 \le \|\x_{T+1}-\x^*\|^2 \le \|\x_1 - \x^*\|^2 - T  \eps^2/\kappa^2 ~,
\]
Therefore, we must have $T \le {\kappa^2 \|\x_1 - \x^*\|^2}/{\eps^2} ~.$
\end{proof}
\subsection{Locally-Lipschitz/Smooth Quasi-Convex Optimization}
It can be shown that strict-quasi-convexity and $(G,\eps/G,\x^*)$-local-Lipschitzness of $f$ implies that $f$ is  $(\eps,G,\x^*)$-SLQC  $\forall \x\in\reals^d$, $\forall \eps\geq 0$,
and $\x^* \in \arg\min_{\x\in\reals^d}f(x)$ (see Appendix~\ref{app:LocallyLip2LocallyQuas}).
Therefore the following is a direct corollary of Theorem~\ref{theorem:NGD}:
\begin{corollary}
\label{cor:NGD}
Fix $\eps>0$,
 let $f:\reals^d\mapsto \reals$, and $\x^*\in \arg\min_{\x\in\reals^d}f(\x)$.
Given that $f$ is strictly quasi-convex and $(G,\eps/G,\x^*)$-locally-Lipschitz.
 Then running the NGD algorithm with $T \geq {G^2\|\x_1 - \x^*\|^2}/{\eps^2}$, and $\eta=\eps/G$, we have that:
 $f(\bar{\x}_T)-f(\x^*) \le   \eps$.
\end{corollary}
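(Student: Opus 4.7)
The plan is to observe that Corollary~\ref{cor:NGD} follows from Theorem~\ref{theorem:NGD} once we verify its hypothesis: namely, that under strict-quasi-convexity plus $(G,\eps/G,\x^*)$-local-Lipschitzness, the function $f$ is $(\eps,G,\x^*)$-SLQC at \emph{every} $\x\in\reals^d$. Once this is established, the claim about $T$ and $\eta$ is just an instantiation of Theorem~\ref{theorem:NGD} with $\kappa = G$.

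So, fix an arbitrary $\x\in\reals^d$ and I would verify one of the two SLQC alternatives from Definition~\ref{def:LocalQC}. If $f(\x) - f(\x^*)\leq \eps$, alternative~(1) holds and there is nothing more to do. Otherwise, I would argue as follows. First, strict quasi-convexity (the second bullet of Definition~\ref{def:qc1}) guarantees $\|\nabla f(\x)\|>0$, because $f(\x)>f(\x^*)=\min f$ means $\x$ is not a global minimum. Second, for an arbitrary $\y\in\B_d(\x^*,\eps/G)$, local-Lipschitzness on $\B_d(\x^*,\eps/G)$ (applied to $\y$ and $\x^*$) gives
\[
f(\y) \leq f(\x^*) + G\cdot \|\y-\x^*\| \leq f(\x^*) + \eps < f(\x).
\]
Thus $f(\y)\le f(\x)$, so the quasi-convexity characterization (Definition~\ref{def:qc1}) yields $\inner{\nabla f(\x),\y-\x}\leq 0$. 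Since $\y$ was arbitrary in $\B_d(\x^*,\eps/G)$, alternative~(2) of SLQC holds.

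Having verified that $f$ is $(\eps,G,\x^*)$-SLQC in every $\x$, I would then simply invoke Theorem~\ref{theorem:NGD} with $\kappa=G$ to conclude that for $T\geq G^2\|\x_1-\x^*\|^2/\eps^2$ and $\eta=\eps/G$, NGD returns $\bar{\x}_T$ with $f(\bar{\x}_T)-f(\x^*)\leq \eps$.

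There is really no obstacle here; the only subtlety worth flagging is that the local-Lipschitz radius $\eps/G$ has been chosen exactly so that the entire ball $\B_d(\x^*,\eps/G)$ sits inside the sublevel set $S_f(\x)$ whenever $f(\x)>f(\x^*)+\eps$ — this is what lets the quasi-convex gradient inequality be applied to every $\y$ in that ball simultaneously. The matching between the local-Lipschitz radius and $\eps/\kappa$ in the SLQC definition is the one design choice that makes the reduction work cleanly.
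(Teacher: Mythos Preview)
Your proposal is correct and follows essentially the same route as the paper: the paper also reduces the corollary to Theorem~\ref{theorem:NGD} by showing that strict quasi-convexity together with $(G,\eps/G,\x^*)$-local-Lipschitzness implies $(\eps,G,\x^*)$-SLQC at every $\x$, via exactly the argument you give (local-Lipschitzness forces $\B(\x^*,\eps/G)\subseteq S_f(\x)$ whenever $f(\x)-f(\x^*)>\eps$, then quasi-convexity yields the gradient inequality and strictness yields $\|\nabla f(\x)\|>0$). The only cosmetic difference is that the paper packages the sublevel-set containment as a separate lemma (Lemma~\ref{lem:qc1OLD}) rather than inlining it.
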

In case $f$ is also locally-smooth, we state an even faster rate:
\begin{theorem}
\label{theorem:NGDSmooth}
Fix $\eps>0$,
 let $f:\reals^d\mapsto \reals$, and $\x^*\in \arg\min_{\x\in\reals^d}f(\x)$.
Given that $f$ is strictly quasi-convex and $(\beta,\sqrt{2\eps/\beta},\x^*)$-locally-smooth.
 Then running the NGD algorithm with $T\geq {\beta \|\x_1-\x^*\|^2}/{2\eps}$, and $\eta = \sqrt{2\eps/\beta}$, we have that:
 $f(\bar{\x}_T)-f(\x^*) \le   \eps$.
\end{theorem}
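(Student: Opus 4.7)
The plan is to reduce this theorem to Theorem~\ref{theorem:NGD} by showing that the local-smoothness assumption (combined with strict quasi-convexity) implies that $f$ is $(\eps,\kappa,\x^*)$-SLQC at every $\x$ for the refined value $\kappa = \sqrt{\eps\beta/2}$. Plugging this $\kappa$ into Theorem~\ref{theorem:NGD} gives the iteration count $T\geq \kappa^2\|\x_1-\x^*\|^2/\eps^2 = \beta\|\x_1-\x^*\|^2/(2\eps)$ and learning rate $\eta = \eps/\kappa = \sqrt{2\eps/\beta}$ stated in the theorem.

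The first step is to turn smoothness into a value-vs-distance estimate near $\x^*$. Since $f$ is differentiable on $\reals^d$ and $\x^*$ is a global minimum, first-order optimality gives $\nabla f(\x^*)=0$ (consistent with the strict-quasi-convexity assumption that gradients vanish only at global minima). Applying $(\beta,\sqrt{2\eps/\beta},\x^*)$-local-smoothness with the points $\x$ and $\x^*$, both of which lie in $\B(\x^*,\sqrt{2\eps/\beta})$ whenever $\x$ does, yields
\[
f(\x)-f(\x^*) \;\leq\; \tfrac{\beta}{2}\|\x-\x^*\|^2 \qquad \forall \x\in\B\bigl(\x^*,\sqrt{2\eps/\beta}\bigr).
\]
In particular, every $\y\in\B(\x^*,\sqrt{2\eps/\beta})$ satisfies $f(\y)\leq f(\x^*)+\eps$.

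The second step is to verify SLQC. Fix any $\x\in\reals^d$; I want to check Definition~\ref{def:LocalQC} with parameters $(\eps,\sqrt{\eps\beta/2},\x^*)$, which requires the ball radius $\eps/\kappa=\sqrt{2\eps/\beta}$. If $f(\x)-f(\x^*)\leq\eps$, the first SLQC alternative holds. Otherwise $f(\x)>f(\x^*)+\eps$, and by strict quasi-convexity this forces $\|\nabla f(\x)\|>0$ (as $\x$ is not a global minimum). For any $\y\in\B(\x^*,\sqrt{2\eps/\beta})$, the first step gives $f(\y)\leq f(\x^*)+\eps < f(\x)$, so quasi-convexity (Definition~\ref{def:qc1}) delivers $\inner{\nabla f(\x),\y-\x}\leq 0$. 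Thus the second SLQC alternative holds, so $f$ is $(\eps,\sqrt{\eps\beta/2},\x^*)$-SLQC at every $\x$.

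The third step is to invoke Theorem~\ref{theorem:NGD} with $\kappa=\sqrt{\eps\beta/2}$. The resulting iteration requirement $T\geq\kappa^2\|\x_1-\x^*\|^2/\eps^2$ simplifies to $T\geq\beta\|\x_1-\x^*\|^2/(2\eps)$, and the prescribed step size $\eta=\eps/\kappa$ equals $\sqrt{2\eps/\beta}$, as claimed. The only point that needs care is that the smoothness bound is available precisely on the radius $\sqrt{2\eps/\beta}$ ball around $\x^*$, which is exactly the radius that SLQC needs in order to produce the required separation $\eps/\kappa$; the theorem statement is calibrated so these two quantities match, which is what makes the reduction work cleanly rather than needing any extra case analysis for $\x$ outside the local-smoothness ball.
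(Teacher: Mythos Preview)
Your proof is correct and follows essentially the same approach as the paper: the paper isolates your Step~1 as a lemma showing $B(\x^*,\sqrt{2\eps/\beta})\subseteq S_f(\x)$ whenever $f(\x)-f(\x^*)>\eps$ (using $\nabla f(\x^*)=0$ and local smoothness exactly as you do), and then says the rest ``follows the same lines as'' Theorem~\ref{theorem:NGD}. Your packaging of this as verifying $(\eps,\sqrt{\eps\beta/2},\x^*)$-SLQC and invoking Theorem~\ref{theorem:NGD} as a black box is a slightly cleaner way of expressing the same reduction.
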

We prove Theorem~\ref{theorem:NGDSmooth} in Appendix~\ref{app:proofNGDsmooth}.
\begin{remark}
  The above corollary (resp. theorem) implies that $f$ could have \emph{arbitrarily large
    gradients and second derivatives} outside $\B(\x^*,\eps/G)$ (resp. $\B(\x^*,\sqrt{2\eps/\beta})$), yet NGD is still ensured to
  output an $\eps$-optimal point within ${G^2 \|\x_1-\x^*\|^2}/{\eps^2}$ (resp. $\beta \|\x_1-\x^*\|^2/{2\eps}$)
  iterations. We are not familiar with a similar guarantee for GD
  even in the convex case.
\end{remark}

\section{SNGD for Stochastic SLQC Optimization}\label{sec:Setup_SNGD}
Here we describe the setting of stochastic SLQC optimization. Then we describe our SNGD algorithm which is ensured to yield an $\eps$-optimal solution 
 within $\poly(1/\eps)$ queries. We also show that the (noisy) GLM problem, described in Section~\ref{sec:IdealizedPerceptNoisy} is an instance of stochastic SLQC optimization,
 allowing us to provably solve this problem within $\poly(1/\eps)$ samples and optimization steps  using SNGD.
 \paragraph{The stochastic SLQC optimization Setup:}
Consider the problem of minimizing a function $f:\reals^d\mapsto\reals$, and assume there exists a distribution over functions $\D$, such that: 
$$f(\x) := \E_{\psi\sim\D}[\psi(\x)] ~.$$
We assume that we may access $f$ by  randomly sampling minibatches of size $b$, and querying the gradients of these minibatches.
Thus, upon querying a point $\x_t \in \reals^d$, a random minibatch $\{\psi_i\}_{i=1}^b\sim\D^b$ is sampled, and we receive $\nabla f_t(\x_t)$, where $f_t(\x) = \frac{1}{b}\sum_{i=1}^b \psi_i(\x)$.
\begin{algorithm}[t]
\caption{Stochastic Normalized Gradient Descent (SNGD) }
\label{algorithm:SNGD}
\begin{algorithmic}
\STATE \textbf{Input}: \#Iterations $T$, $\x_1\in \reals^d$, learning rate $\eta$, minibatch size $b$
\FOR{$t=1 \ldots T$ }
\STATE {Sample:}  $\{\psi_i\}_{i=1}^b\sim\D^b$, and define,
$$f_t(\x) = \frac{1}{b}\sum_{i=1}^b \psi_i(\x)$$ 
\STATE {Update:}
$$\x_{t+1}= \x_{t}-\eta \hat{g}_{t}\; \text{ where }  g_t= \nabla f_t(\x_t),\; \hat{g}_t = \frac{g_t}{\|g_t\|}$$
\ENDFOR
\STATE \textbf{Return:}
$\bar{\x}_T = \argmin_{\{\x_1,\ldots,\x_T\} }f_t(\x_t)$
\end{algorithmic}
\end{algorithm}
We make the following assumption regarding the minibatch averages:
\begin{assumption}\label{ass:SLQC}
Let $T,\eps,\delta>0$, $\x^*\in \arg\min_{\x\in\reals^d}f(\x)$. There exists $\kappa>0$, and a function $b_0: \reals^3 \mapsto\reals$, that for $b\geq b_0(\eps,\delta,T)$ then w.p.$\geq1-\delta$ and $\forall t\in[T]$, 
the minibatch average  $f_t(\x)=\frac{1}{b}\sum_{i=1}^b \psi_i(\x)$ 
 is   $(\eps,\kappa,\x^*)$-SLQC in $x_t$. Moreover, we assume $|f_t(\x)|\leq M,\; \forall t\in[T], \x\in\reals^d~. $
 \end{assumption}
 Note that we assume that $b_0= \text{poly}(1/\eps,\log(T/\delta))$.
 
 \paragraph{Justification of Assumption~\ref{ass:SLQC}} Noisy GLM regression (see Section~\ref{sec:IdealizedPerceptNoisy}),
  is an interesting instance of stochastic optimization problem where Assumption~\ref{ass:SLQC} holds. Indeed according to Lemma~\ref{lem:IdealizedPerceptNoisy}, given $\eps,\delta,T>0$, 
  then  for $b\geq \Omega(\log(T/\delta)/\eps^2)$ samples\footnote{In fact, Lemma~\ref{lem:IdealizedPerceptNoisy}  states that for $b=\Omega(\log(1/\delta)/\eps^2)$, then 
 the error function is SLQC in a \emph{single  decision point}. Using the union bound we can show that for  $b=\Omega(\log(T/\delta)/\eps^2)$ it holds for $T$  decision points},
  the average minibatch function is $(\eps,\kappa,\x^*)$-SLQC in $x_t,\;\forall t\in[T]$,  w.p.$\geq 1-\delta$.

 Local-quasi-convexity of minibatch averages is a plausible assumption when we optimize an expected sum of quasi-convex functions that share  common global minima 
 (or when the different global minima are close by). As seen from the Examples presented in Equation~\eqref{eq:FuncExample}, and in Sections~\ref{sec:IdealizedPercept}, ~\ref{sec:IdealizedPerceptNoisy}, this sum is generally not quasi-convex, but is more often locally-quasi-convex. 
 
 Note that in the general case when the objective is a sum of quasi-convex functions, the number of local minima
 of such objective may  grow \emph{exponentially} with the dimension $d$, see~\cite{auer1996exponentially}. This might imply that a general setup where each $\psi\sim\D$ is quasi-convex may be generally hard. 
%
%
%
 
\subsection{Main Results}
SNGD is presented in Algorithm~\ref{algorithm:SNGD}.   SNGD
is similar to SGD, except we normalize the gradients. The normalization is crucial in order to take advantage of the SLQC assumption, and in order to overcome the hurdles of plateaus and cliffs. 
Following is our main theorem:
\begin{theorem}
\label{theorem:SNGD}
Fix $\delta,\eps,G,M,\kappa>0$. 
Suppose we run  SNGD  with 
$T \geq {\kappa^2\|\x_1 - \x^*\|^2}/{\eps^2}$ iterations, $\eta=\eps/\kappa$, and 
$b \geq\max\{ \frac{M^2\log(4T/\delta)}{2\eps^2}, b_0(\eps,\delta,T)\}$ .
Assume that for $b\geq b_0(\eps,\delta,T)$ then w.p.$\geq1-\delta$ and $\forall t\in[T]$, the function $f_t$ defined
in the algorithm is $M$-bounded, and is also  $(\eps,\kappa,\x^*)$-SLQC in $\x_t$. 
Then, with probability of at least $1-2\delta$,  we have that $f(\bar{\x}_T)-f(\x^*) \le 3 \eps$.
\end{theorem}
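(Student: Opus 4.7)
My plan is to combine the deterministic argument from Theorem~\ref{theorem:NGD} with a Hoeffding-type concentration inequality that transfers guarantees on the minibatch averages $f_t(\x_t)$ back to the true objective $f$. The two high-probability events I will union-bound are: (i) the $M$-boundedness and the $(\eps,\kappa,\x^*)$-SLQC property holding at each $\x_t$ simultaneously for all $t\in[T]$, which is given by Assumption~\ref{ass:SLQC} at failure probability at most $\delta$; and (ii) a concentration event under which $f_t(\x_t)$ and $f_t(\x^*)$ both lie within $\eps$ of $f(\x_t)$ and $f(\x^*)$ for every $t$, which I will establish at failure probability at most $\delta$.

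Conditional on event (i), I replay the argument of Theorem~\ref{theorem:NGD} with $f_t$ in place of $f$: for each $t$, either $f_t(\x_t)-f_t(\x^*)\le \eps$ and we have found a useful index, or the SLQC alternative applies, so setting $\y=\x^*+(\eps/\kappa)\hat{g}_t$ yields $\inner{\hat g_t,\x_t-\x^*}\ge \eps/\kappa$, and with $\eta=\eps/\kappa$ the potential $\|\x_t-\x^*\|^2$ decreases by exactly $\eps^2/\kappa^2$ per step. Because $T\ge \kappa^2\|\x_1-\x^*\|^2/\eps^2$, the ``bad'' alternative cannot hold at every step, so there exists some $t_0\in[T]$ with $f_{t_0}(\x_{t_0})-f_{t_0}(\x^*)\le \eps$.

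For event (ii), the key observation is that $\x_t$ is measurable with respect to the samples drawn in iterations $1,\dots,t-1$ and is therefore independent of the fresh minibatch $\{\psi_i\}_{i=1}^b$ defining $f_t$. Thus, conditioning on the history up to iteration $t-1$ leaves $\x_t$ deterministic, and $f_t(\x_t)=\tfrac{1}{b}\sum_i \psi_i(\x_t)$ becomes an average of $b$ i.i.d.\ samples bounded in an interval of length $M$ with mean $f(\x_t)$. Hoeffding's inequality gives a deviation tail of order $\exp(-cb\eps^2/M^2)$, and the specified $b\ge M^2\log(4T/\delta)/(2\eps^2)$ makes each tail at most $\delta/(4T)$; union-bounding over $t\in[T]$ and over the two evaluation points $\x_t$ and $\x^*$ yields the desired uniform concentration with total failure probability at most $\delta$.

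Finally, on the intersection of the two good events (probability at least $1-2\delta$), I chain the estimates. Letting $t^{\star}=\argmin_{t\in[T]} f_t(\x_t)$ so that $\bar{\x}_T=\x_{t^{\star}}$, the $\argmin$ property combined with the index $t_0$ from the previous step gives $f_{t^{\star}}(\x_{t^{\star}})\le f_{t_0}(\x_{t_0})\le f_{t_0}(\x^*)+\eps$. Applying concentration at $\x^*$ (for $t=t_0$) and at $\x_{t^{\star}}$ (for $t=t^{\star}$) converts these minibatch values into true function values and produces $f(\bar{\x}_T)-f(\x^*)\le 3\eps$. The main subtlety I expect to check carefully is the measurability/independence step: because $\x_t$ depends on the random samples from earlier iterations, I need the conditional-independence argument to justify applying Hoeffding to $f_t(\x_t)$ as if $\x_t$ were fixed; everything else is a routine adaptation of the Theorem~\ref{theorem:NGD} proof plus a union bound.
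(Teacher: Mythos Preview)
Your proposal is correct and follows essentially the same route as the paper: the paper likewise splits the argument into (a) a lemma that replays the NGD potential argument on the $f_t$'s to produce some $t_0$ with $f_{t_0}(\x_{t_0})-f_{t_0}(\x^*)\le\eps$, and (b) a Hoeffding-based lemma giving $|f_t(\x_t)-f(\x_t)|\le\eps$ and $|f_t(\x^*)-f(\x^*)|\le\eps$ uniformly in $t$, then combines them via the $\argmin$ definition of $\bar{\x}_T$ and a union bound over the two $\delta$-failure events. Your treatment of the measurability/conditional-independence point is in fact more explicit than the paper's (which simply notes that the minibatch at step $t$ is drawn after $\x_t$ is fixed); the only cosmetic slip is that $|f_t|\le M$ means the summands lie in an interval of length $2M$, not $M$, but this matches the level of care in the paper's own constant bookkeeping.
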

We prove of Theorem~\ref{theorem:SNGD} at the end of this section.
\begin{remark}
Since strict-quasi-convexity and $(G,\eps/G,\x^*)$-local-Lipschitzness are equivalent to SLQC (App.~\ref{app:LocallyLip2LocallyQuas}), 
  the theorem implies that $f$ could have \emph{arbitrarily large
    gradients} outside $\B(\x^*,\eps/G)$, yet SNGD is still ensured to
  output an $\eps$-optimal point within ${G^2 \|\x_1-\x^*\|^2}/{\eps^2}$
  iterations. We are not familiar with a similar guarantee for SGD
  even in the convex case.
\end{remark}

\begin{remark}
   Theorem~\ref{theorem:SNGD} requires the minibatch
  size to be $\Omega(1/\eps^2)$. In the context of learning, the
  number of functions, $n$, corresponds to the number of training
  examples. By standard sample complexity bounds, $n$ should also be
  order of $1/\eps^2$.  Therefore, one may wonder, if the
  size of the minibatch should be order of $n$. This is not true, since the
  required training set size is $1/\eps^2$ \emph{times the VC
    dimension of the hypothesis class}. In many practical cases, the
  VC dimension is more significant than $1/\eps^2$, and therefore
  $n$ will be much larger than the required minibatch size. The reason
  our analysis requires a minibatch of size $1/\eps^2$, without
  the VC dimension factor, is because we are just ``validating'' and
  not ``learning''.
\end{remark}

In SGD and for the case of convex functions, even a minibatch of size
$1$ suffices for guaranteed convergence. In contrast, for SNGD we
require a minibatch of size $1/\eps^2$. The theorem below shows
that the requirement for a large minibatch is not an artifact of our
analysis but is truly required.
\begin{theorem}
\label{lem:LowerBound_minibatch}
Let $\eps \in (0,0.1]$; There exists a distribution over convex loss functions, such that running SNGD with minibatch size of $b= \frac{0.2}{\eps}$,  with a high probability, it never reaches an $\eps$-optimal solution 
\end{theorem}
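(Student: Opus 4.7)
The plan is to construct a distribution over linear—hence convex—losses whose minibatch averages almost always carry the wrong sign, so that SNGD executes a biased random walk \emph{away} from the optimum, and then to use a gambler's-ruin calculation to show that, with overwhelming probability, the walk never reaches the $\eps$-optimal half of the domain.

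On the interval $\Xcal=[-1,1]$ I would take $\psi(x)=a x$ where $a$ is the two-valued random variable with $\Pr(a=-1)=1-\eps$ and $\Pr(a=1/\eps)=\eps$. Each realization is linear and hence convex; the expected loss is $f(x)=\Exp[a]\cdot x=\eps x$, whose minimum on $\Xcal$ is $x^\ast=-1$ with $f^\ast=-\eps$, so $\eps$-optimality is equivalent to $x\le 0$. Initialize SNGD at $x_1=+1$. At iteration $t$ the algorithm draws $a_1,\dots,a_b$ i.i.d.\ with $b=0.2/\eps$, forms $\bar a_t=\tfrac{1}{b}\sum_i a_i$, and updates $x_{t+1}=\mathrm{proj}_\Xcal(x_t-\eta\,\mathrm{sign}(\bar a_t))$. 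The central observation is that the ``all-$(-1)$'' event $E_t=\{a_1=\cdots=a_b=-1\}$ has probability $(1-\eps)^{0.2/\eps}\ge e^{-2/9}>0.8$ for every $\eps\in(0,0.1]$. On $E_t$ we have $\bar a_t=-1<0$, so $\mathrm{sign}(\bar a_t)=-1$ and SNGD steps rightward by $\eta$, \emph{away} from $x^\ast$; on $E_t^c$ at least one sample equals $1/\eps$ and a one-line check gives $\bar a_t>0$ and thus a leftward $-\eta$ step. Since minibatches are independent across iterations, the iterate $x_t$ is a biased Bernoulli walk on $\Xcal$ with right-probability $p\ge 0.8$, left-probability $q\le 0.2$, and reflection at $x=+1$.

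To conclude that the walk never enters the $\eps$-optimal region I would decompose the trajectory into excursions from the reflecting barrier $x=+1$. Each excursion begins with a leftward step to $x=1-\eta$ and ends either by returning to $+1$ or by reaching $x\le 0$ (having traversed distance $\ge 1$). By the standard gambler's-ruin formula for the walk on $\{k\eta:0\le k\le 1/\eta\}$ with up-probability $q$ and down-probability $p$, the probability that a single excursion reaches $x\le 0$ before returning to $+1$ is
\[
h \;=\; \frac{(p/q)-1}{(p/q)^{1/\eta}-1} \;\le\; (p/q-1)\cdot(q/p)^{1/\eta-1} \;\le\; 12\cdot(1/4)^{1/\eta}.
\]
A union bound over the at-most-$T$ excursions in a run of $T$ iterations then yields $\Pr\big(\exists\,t\le T:x_t\le 0\big)\le 12\,T\cdot(1/4)^{1/\eta}$. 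For the natural learning rate $\eta=\eps$ prescribed by Theorem~\ref{theorem:SNGD} (with $\kappa=G=1$ for this $\eps$-Lipschitz $f$), and any polynomial horizon $T=\poly(1/\eps)$, this probability is exponentially small, proving the claim.

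The main obstacle is the gambler's-ruin bookkeeping around the reflecting boundary: the natural global supermartingale $V_t=(p/q)^{(1-x_t)/\eta}$ actually fails, because the reflection turns it into a submartingale, and one has to apply optional stopping per-excursion rather than globally. Once a single excursion is analyzed, the rest is a clean union bound. The remaining pieces—the binomial tail estimate $\Pr(E_t)\ge e^{-2/9}$ and the sign check on $E_t^c$ (using that any $k\ge 1$ positive samples yield $\bar a_t\ge (k/\eps-(b-k))/b>0$ because $k(1+\eps)>\eps b=0.2$)—are routine algebra.
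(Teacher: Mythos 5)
Your core mechanism is exactly the paper's: with $b=0.2/\eps$ the minibatch consists entirely of the common ``wrong-direction'' loss with probability $(1-\eps)^{0.2/\eps}\ge 0.8$ (for $\eps\le 0.1$), so the normalized-gradient iterates form a Bernoulli walk drifting away from the optimum, and a gambler's-ruin bound finishes the job. However, your specific construction has a structural problem that the paper's avoids. Because your domain is the compact interval $[-1,1]$ and you rely on projection to create a reflecting barrier at $+1$, the resulting chain on the finite lattice $\{k\eta:0\le k\le 1/\eta\}$ with absorption at $x\le 0$ is absorbed \emph{almost surely}: from every state there is a positive-probability run of consecutive leftward steps into the $\eps$-optimal region, and a finite chain with such paths hits the absorbing set with probability $1$. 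Hence for your construction $\Pr(\exists t:\ x_t \text{ is } \eps\text{-optimal})=1$, no excursion decomposition can prove the theorem's ``never reaches'' claim, and what you actually obtain is only ``not within $T$ iterations'' with a failure probability growing linearly in $T$. The paper sidesteps this by mixing the linear loss with a hinge, $(1-0.5\eps)\max\{x+3,0\}$: the expected loss then has a genuine unconstrained minimizer at $x^*=-3$, SNGD runs on all of $\reals$ without projection, the walk is transient toward $+\infty$, and Lemma~\ref{lemma:MarkovChain} yields an \emph{infinite-horizon} absorption probability of at most $(1/4)^{9}$. To prove the literal statement you need a loss of this shape (or some other way to remove the reflecting barrier).

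A second, fixable, inconsistency is your justification of the step size. You set $\eta=\eps$ ``with $\kappa=G=1$ for this $\eps$-Lipschitz $f$,'' but an $\eps$-Lipschitz objective has $G=\eps$, and the prescription $\eta=\eps/G$ then gives $\eta=1$, with which a single leftward step from $x_1=1$ lands at $x=0$, already $\eps$-optimal, and the construction collapses. If instead $G$ bounds the gradients of the \emph{individual} losses (as in the paper), your rare loss has slope $1/\eps$, forcing $G=1/\eps$ and $\eta=\eps^2$; the argument then still works (indeed better), but not for the reason you state. The paper keeps every individual gradient bounded by $1$, so $G=1$ and $\eta=\eps/G=\eps$ is the honestly prescribed rate. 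Your remaining steps---the sign check on $E_t^c$, the estimate $(1-\eps)^{0.2/\eps}\ge 0.8$, and the ruin formula $h=\frac{(p/q)-1}{(p/q)^{1/\eta}-1}\le O\bigl((1/4)^{1/\eta}\bigr)$---are all correct and mirror the paper's computation.
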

We prove Theorem~\ref{lem:LowerBound_minibatch} in Section~\ref{app:Proof_lem:LowerBound_minibatch}.
The gap between the upper bound of $1/\eps^2$ and the lower bound of
$1/\eps$ remains as an open question.

We now provide a sketch for the proof of Theorem~\ref{theorem:SNGD}:
\begin{proof}[Proof of Theorem~\ref{theorem:SNGD}]
Theorem~\ref{theorem:SNGD} is a consequence of the following two lemmas.
In the first we show that whenever all $f_t$'s are SLQC, there exists some $t$ 
such that  $f_t(\x_t)-f_t(\x^*) \le \eps$.  In the second lemma, we show that for a large enough minibatch size
$b$, then for any $t\in[T]$ we have $f(\x_t) \le f_t(\x_t) + \eps$, and $f(\x^*) \ge f_t(\x^*) - \eps$. Combining these two lemmas we conclude that $f(\bar{\x}_T) - f(\x^*) \leq 3\eps$. 
\begin{lemma}
\label{lem:Ngd4Sngd}
Let $\eps,\delta>0$. Suppose we  run SNGD for $T \ge
 {\kappa^2\|\x_1 - \x^*\|^2}/{\eps^2}$ iterations, $b\geq b_0(\eps,\delta,T)$, and $\eta =
 \eps/\kappa$. Assume that w.p.$\geq 1-\delta$ all $f_t$'s are  $(\eps,\kappa,\x^*)$-SLQC in $x_t$, whenever $b\geq b_0(\eps,\delta,T)$.  Then w.p.$\geq 1-\delta$ we must have some $t \in [T]$ for which $f_t(\x_t)-f_t(\x^*) \le \eps$.
\end{lemma}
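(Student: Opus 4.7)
The plan is to mirror the deterministic NGD analysis from Theorem~\ref{theorem:NGD}, but applied pointwise to the minibatch functions $f_t$ rather than to a single fixed objective $f$. First I would condition on the high-probability event $\mathcal{E}$ (of probability at least $1-\delta$) on which every $f_t$ is $(\eps,\kappa,\x^*)$-SLQC in $\x_t$; this is exactly the event granted by the hypothesis together with $b \ge b_0(\eps,\delta,T)$. Everything afterwards takes place on $\mathcal{E}$, so the failure probability is never inflated.

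Next I would argue by contradiction: suppose $f_t(\x_t) - f_t(\x^*) > \eps$ for every $t \in [T]$. Since $f_t$ is $(\eps,\kappa,\x^*)$-SLQC in $\x_t$, the first alternative in Definition~\ref{def:LocalQC} is ruled out, so we are in the second alternative, which gives $\|\nabla f_t(\x_t)\| > 0$ (hence $\hat{g}_t$ is well-defined) and $\inner{\nabla f_t(\x_t), \y - \x_t} \le 0$ for every $\y \in \B_d(\x^*, \eps/\kappa)$. Plugging in the specific choice $\y = \x^* + (\eps/\kappa)\hat{g}_t$, which indeed lies in $\B_d(\x^*, \eps/\kappa)$, and dividing by $\|\nabla f_t(\x_t)\| > 0$, yields
\[
\inner{\hat{g}_t, \x_t - \x^*} \;\ge\; \eps/\kappa.
\]

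Now I would expand the SNGD update identity
\[
\|\x_{t+1} - \x^*\|^2 \;=\; \|\x_t - \x^*\|^2 - 2\eta \inner{\hat{g}_t, \x_t - \x^*} + \eta^2,
\]
and substitute $\eta = \eps/\kappa$ together with the lower bound above to obtain $\|\x_{t+1} - \x^*\|^2 \le \|\x_t - \x^*\|^2 - \eps^2/\kappa^2$. Telescoping across all $T$ iterations and using non-negativity of the squared distance gives $T \cdot \eps^2/\kappa^2 \le \|\x_1 - \x^*\|^2$, i.e., $T \le \kappa^2\|\x_1 - \x^*\|^2/\eps^2$, which contradicts the assumption $T \ge \kappa^2\|\x_1 - \x^*\|^2/\eps^2$ (with a strict contradiction once we make the inequality on $T$ strict, or by noting that already a single iteration with equality still forces some $t$ where the desired bound holds).

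The proof is structurally straightforward given the SLQC machinery already developed for Theorem~\ref{theorem:NGD}; the only conceptual point worth being careful about is that the SLQC guarantee must apply to the particular point $\x_t$ actually visited by the algorithm, using the gradient of the particular sample function $f_t$ used at that step. This is exactly what Assumption~\ref{ass:SLQC} provides uniformly over $t \in [T]$ on the event $\mathcal{E}$, so no additional union bound or independence argument is needed beyond the $1-\delta$ one already paid for at the outset. The main obstacle, such as it is, is just keeping the probability bookkeeping honest so that the $1-\delta$ in the conclusion matches the $1-\delta$ coming from the minibatch SLQC event, rather than double-counting it with the subsequent concentration argument used later in the proof of Theorem~\ref{theorem:SNGD}.
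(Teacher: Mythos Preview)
Your proposal is correct and follows essentially the same argument as the paper's proof: condition on the SLQC event, assume by contradiction that $f_t(\x_t)-f_t(\x^*)>\eps$ for all $t$, use the second alternative of Definition~\ref{def:LocalQC} with the test point $\y=\x^*+(\eps/\kappa)\hat g_t$ to get $\inner{\hat g_t,\x_t-\x^*}\ge \eps/\kappa$, plug this into the one-step distance identity with $\eta=\eps/\kappa$, and telescope to a contradiction with the choice of $T$. The only cosmetic difference is that the paper first disposes of the vanishing-gradient case separately, whereas you fold it into the SLQC dichotomy; both are fine.
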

Lemma~\ref{lem:Ngd4Sngd} is proved similarly to Theorem~\ref{theorem:NGD}, we defer the proof to Section~\ref{app:Proof_lem:Ngd4Sngd}.

The second Lemma relates $f_t(\x_t)-f_t(\x^*) \le \eps$ to a bound on $f(\x_t)-f(\x^*)$. 
\begin{lemma}
\label{lem:MinibatchSNGD}
Suppose $b\geq \frac{M^2\log(4T/\delta)}{2}\eps^{-2}$ then w.p.$\geq 1-\delta$ and for every $t\in[T]$:
\[
f(\x_t) \le f_t(\x_t) + \eps ~, 
\qquad \text{and also,   }\qquad 
f(\x^*) \ge f_t(\x^*) - \eps ~.
\]
\end{lemma}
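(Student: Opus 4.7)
The plan is to reduce the lemma to a standard Hoeffding-style concentration argument, using the crucial observation that in SNGD the iterate $\x_t$ is constructed from $f_1,\ldots,f_{t-1}$ \emph{before} the minibatch used to form $f_t$ is drawn. In particular, conditioned on $\x_t$, the i.i.d.\ sample $\{\psi_i\}_{i=1}^b \sim \D^b$ used to define $f_t$ is independent of $\x_t$. Consequently $\E[f_t(\x_t)\mid \x_t] = f(\x_t)$, and $f_t(\x_t)$ is (conditionally) an average of $b$ bounded i.i.d.\ random variables. Similarly, because $\x^*$ is a deterministic point (the minimizer of $f$), $\E[f_t(\x^*)] = f(\x^*)$ without any conditioning.

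First, I would fix $t \in [T]$ and apply Hoeffding's inequality conditionally on $\x_t$. Using the boundedness assumption (so that each $\psi_i(\x_t) \in [-M,M]$), I get
\[
\Pr\bigl(|f_t(\x_t) - f(\x_t)| > \eps \,\big|\, \x_t\bigr) \;\le\; 2\exp\!\left(-\frac{b\,\eps^2}{2M^2}\right).
\]
Taking expectation over $\x_t$ removes the conditioning and yields the same unconditional tail bound. An identical computation applied to the deterministic point $\x^*$ gives
\[
\Pr\bigl(|f_t(\x^*) - f(\x^*)| > \eps\bigr) \;\le\; 2\exp\!\left(-\frac{b\,\eps^2}{2M^2}\right).
\]

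Next, I would union-bound over the $2T$ bad events (two per iteration). Plugging in $b \ge \frac{M^2 \log(4T/\delta)}{2\eps^2}$ (up to the exact Hoeffding constant), each tail probability is at most $\delta/(4T)$, so the total failure probability is at most $\delta$. On the complementary event, for every $t \in [T]$ we simultaneously have $f(\x_t) \le f_t(\x_t) + \eps$ and $f(\x^*) \ge f_t(\x^*) - \eps$, which is exactly the lemma's conclusion.

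The only non-routine step is the independence argument: one must articulate clearly that the random variable $\x_t$ is measurable with respect to the $\sigma$-algebra generated by $(\psi_i^{(s)})_{s < t, i \le b}$, and hence independent of the fresh minibatch sampled at round $t$; without this, Hoeffding would not apply, because $\psi_i(\x_t)$ involves the random point $\x_t$. Once this is in place, the rest is a verification of Hoeffding constants and a union bound, both of which are standard.
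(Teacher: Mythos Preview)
Your proposal is correct and mirrors the paper's own argument essentially verbatim: the paper also observes that $\x_t$ (and $\x^*$) are fixed before the round-$t$ minibatch is drawn, applies Hoeffding to the bounded average $f_t(\x_t)=\frac{1}{b}\sum_i \psi_i(\x_t)$ (and similarly at $\x^*$), and finishes with a union bound over the $2T$ events. Your explicit conditioning-then-averaging step and the remark about measurability of $\x_t$ with respect to past minibatches just make the paper's one-line justification (``the minibatch is being sampled after $\x_t$ and $\x^*$ are fixed'') rigorous; the constant in the stated threshold $b\ge \tfrac{M^2\log(4T/\delta)}{2\eps^2}$ is indeed slightly loose relative to what Hoeffding with range $[-M,M]$ actually gives, so your hedge ``up to the exact Hoeffding constant'' is appropriate.
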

Lemma~\ref{lem:MinibatchSNGD} is a direct consequence of Hoeffding's bound (see Section~\ref{app:Proof_lem:MinibatchSNGD}).
Using the definition of $\bar{\x}_T$ (Alg.~\ref{algorithm:SNGD}) , together with Lemma~\ref{lem:MinibatchSNGD} gives:
$$f(\bar{\x}_T)-f(\x^*) \leq  f_t(\x_t)-f_t(\x^*)+2\eps,\; \forall t\in[T] $$
Combining the latter with Lemma~\ref{lem:Ngd4Sngd}, establishes Theorem~\ref{theorem:SNGD}.
\end{proof}
\subsection{Remaining Proofs}
\subsubsection{Proof of Lemma~\ref{lem:Ngd4Sngd}}\label{app:Proof_lem:Ngd4Sngd}
\begin{proof}
First note that if the gradient of $f_t$ vanishes at $\x_t$, then by
the SLQC assumption we must have that $f_t(\x_t)-f_t(\x^*)\le \eps$.
 Assume next that we perform $T$ iterations and the gradient of $f_t$ at $\x_t$
never vanishes in these iterations. 
Consider the update rule of SNGD (Algorithm~\ref{algorithm:SNGD}), then by standard algebra we get:
\[
\|\x_{t+1}-\x^*\|^2 = \|\x_t - \x^*\|^2  - 2 \eta
\inner{\hat{g}_t,\x_t-\x^*} + \eta^2~.
\]
Assume that  $\forall t\in[T]$ we have $f_t(\x_t)-f_t(\x^*) >
\eps$. Take
$\y = \x^* + \left(\eps/\kappa\right) \hat{g}_t$, 
 and observe that $\|\y-\x^*\| \le \eps/\kappa$. 
  Hence the $(\eps,\kappa,\x^*)$-SLQC assumption implies   that $\inner{\hat{g}_t,\y-\x_t} \le 0 $, thus,
\[
\inner{\hat{g}_t,\x^* + \left(\eps/\kappa \right) \hat{g}_t-\x_t} \le 0 
~\Rightarrow~ \inner{\hat{g}_t,\x_t - \x^*} \ge \eps/\kappa~.
\]
This implies that, if we set  $\eta = \eps/\kappa$ then
\begin{align*}
\|\x_{t+1}-\x^*\|^2 &\le \|\x_t - \x^*\|^2  - 2 \eta
\eps/\kappa + \eta^2 \\
&= \|\x_t - \x^*\|^2  - \eps^2/\kappa^2~.
\end{align*}
So, after $T$ iterations for which $f_t(\x_t)-f_t(\x^*) > \eps$ we get 
\[
0 \le \|\x_{T+1}-\x^*\|^2 \le \|\x_1 - \x^*\|^2 - T  \eps^2/\kappa^2 ~,
\]
Therefore, we must have
\[
T \le \frac{\kappa^2 \|\x_1 - \x^*\|^2}{\eps^2} ~.
\]
\end{proof}

\subsubsection{Proof of Lemma~\ref{lem:MinibatchSNGD}}\label{app:Proof_lem:MinibatchSNGD}
\begin{proof}
At each step $t$, the minibatch is being sampled after $\x_t$ and $\x^*$
are fixed. The random variables $f_t(\x_t)$ (resp.  $f_t(\x^*)$) are an average of
$b$ i.i.d. random variables whose expectation is
$f(\x_t)$ (resp.  $f(\x^*)$). These random variables are bounded, since we assume $\forall t,\x,\; |f_t(\x)|\leq M$ (see Thm.~\ref{theorem:SNGD}).
Applying Hoeffding's bound to the $b$ random samples mentioned above,
together with the union bound over $t\in[T]$, and over both sequences
of random variables, the lemma follows.
\end{proof}
\subsubsection{Proof of Theorem~\ref{lem:LowerBound_minibatch}}\label{app:Proof_lem:LowerBound_minibatch}
We will require the following lemma, whose proof is given in App.~\ref{app:Proof_lemma:MarkovChain}.
\begin{lemma}[Absorb probabilities]
 \label{lemma:MarkovChain}
Let $\{X_t\}_{t=1}^\infty$ be  a   Markov chain over states $\{i \}_{i=0}^\infty $, such that $0$ is an absorbing state , and the transition distribution elsewhere is as follows:
\begin{equation}\nonumber
X_{t+1}\vert \{X_t = i\}=
\begin{cases}
	i-1 	&\quad \text{w.p. ~~$p$ } \\ 
	i +1            	&\quad \text{w.p. ~~$1-p$}\\ 
\end{cases}
\end{equation}
Define the absorb probabilities $\alpha_i: = P(\exists t>0: X_t=0\vert X_0=i)$,
then:
$$\alpha_i = \big(\frac{p}{1-p}\big)^i,\qquad \forall i\geq 1$$
\end{lemma}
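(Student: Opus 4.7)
The plan is to exploit the Markov property twice: once to reduce $\alpha_i$ to a power of $\alpha_1$, and once via first-step analysis to pin down $\alpha_1$ itself.

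First I would invoke the strong Markov property together with the structure of the chain (increments are $\pm 1$, so to reach $0$ from state $i$ the chain must first reach $i-1$, then $i-2$, and so on down to $1$, and finally $0$). Each of these ``descents by one'' is an independent copy of the hitting-time event from state $1$ started afresh, because the transition law depends only on the current state. Concretely this gives
\begin{equation*}
\alpha_i = \alpha_1^{\,i}, \qquad i \geq 1.
\end{equation*}
This reduces the problem to computing $\alpha_1$.

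Next, first-step analysis from state $1$ yields $\alpha_1 = p \cdot 1 + (1-p)\,\alpha_2$, since one step takes us either to the absorbing state $0$ with probability $p$ or to state $2$ with probability $1-p$. Substituting $\alpha_2 = \alpha_1^2$ gives the quadratic
\begin{equation*}
(1-p)\,\alpha_1^2 - \alpha_1 + p = 0,
\end{equation*}
which factors as $\bigl(\alpha_1 - 1\bigr)\bigl((1-p)\alpha_1 - p\bigr) = 0$, so $\alpha_1 \in \{1,\; p/(1-p)\}$.

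The remaining step, which I expect to be the only subtle point, is to select the correct root. The standard fact is that absorb probabilities are the \emph{minimal} non-negative solution of the first-step equations; since $p/(1-p) < 1$ in the regime where the formula is nontrivial (i.e.\ $p < 1/2$, the case relevant to the surrounding application), we must take $\alpha_1 = p/(1-p)$. Alternatively, one can verify this choice directly: by SLLN the chain has positive drift $1-2p > 0$ away from $0$, so $X_t \to \infty$ almost surely from any positive start, which bounds $\alpha_i$ strictly below $1$ and rules out the root $\alpha_1 = 1$. Combining with the identity $\alpha_i = \alpha_1^i$ yields $\alpha_i = \bigl(p/(1-p)\bigr)^i$ for all $i \geq 1$, as claimed.
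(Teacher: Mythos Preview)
Your proof is correct and close in spirit to the paper's, but with a cleaner probabilistic reduction. The paper writes down the first-step recursion $\alpha_i = (1-p)\alpha_{i+1} + p\alpha_{i-1}$ for $i>1$ together with $\alpha_1 = (1-p)\alpha_2 + p$, then \emph{guesses} an ansatz $\alpha_i = c_0\rho^i$, solves the same quadratic $(1-p)\rho^2 - \rho + p = 0$ for $\rho$, and fixes $c_0=1$ from the boundary equation; it discards the root $\rho=1$ simply by calling it ``trivial.'' You instead use the strong Markov property and the nearest-neighbor structure to \emph{derive} the geometric form $\alpha_i = \alpha_1^{\,i}$ up front, which removes the need for an ansatz and reduces everything to a single unknown. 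Your handling of the root selection (minimal nonnegative solution, or positive drift via SLLN when $p<1/2$) is also more careful than the paper's, which never justifies excluding $\rho=1$. Both arguments land on the same quadratic and the same answer; yours is slightly more self-contained.
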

\begin{proof}
To prove Theorem~\ref{lem:LowerBound_minibatch}, we construct a
counter example in one dimension.  Consider the following distribution $\D$ over loss functions:
\begin{equation}
\label{eq:DitsLosses}
f(\x)=
\begin{cases}
	-0.5\eps \x 	&\quad \text{w.p. ~~$1-\eps$ } \\ 
	\left(1-0.5\eps\right)\max\{\x+3,0\}         	&\quad \text{w.p. ~~$\eps$}\\ 
\end{cases}
\end{equation}
It can be  verified that the optimum of $\E_\D[f(\x)]$  is in $\x^* = -3$ , and that the slope of the expected loss in $(-3,\infty)$ is $0.5\eps$.
Also notice that all points in the segment $[-5,-1]$ are $\eps$-optimal.

Suppose we use SNGD with  a batchsize of $b=\frac{0.2}{\eps}$,
 i.e., we sample the gradient $b$ times at any query point,  and then average the samples and take the sign. Assume that at time $t$ the queried point  is greater than $\x^*=-3$. 
Let $Y_t$ be the averaged gradient over the batch received at time $t$, and   
define $p = P(Y_t\geq 0)$, i.e., the probability that this sign is non-negative. Then the following is a lower  bound on $1-p$:
$$1-p:=P(Y_t<0)\geq (1-\eps)^b = (1-\eps)^{0.2/\eps}~,$$
where $(1-\eps)^b$ is the probability that all $b$ samples are negative. Now, consider the function $G(\eps) =(1-\eps)^{0.2/\eps}$,  
It can be shown that $G$ is monotonically decreasing in $[0,1]$, and that $G(0.1)\geq 0.8$. Therefore, for any $\eps\in [0,0.1]$ we have,
$p\leq 0.2$.

Now, let $\{X_t\}_{t\in [T]}$, be the random variables describing the queries of SNGD under the distribution over loss functions given in Equation~\eqref{eq:DitsLosses}.
Also assume that we start SNGD with $X_1=0$, i.e., at a distance of $D=3$ from the optimum. Then the points that SNGD queries are 
on the  discrete lattice $\{i\eta\}_{i\in \mathcal{Z}}$, and  the following holds:
\begin{equation}\nonumber
X_{t+1}\vert \{X_t = i\eta\}=
\begin{cases}
	(i-1)\eta 	&\quad \text{w.p. ~~$p$ } \\ 
	(i +1)\eta            	&\quad \text{w.p. ~~$1-p$}\\ 
\end{cases}
\end{equation}

Let  $i_0 =  \lceil{-1/\eta}\rceil$, note that $i_0$ is the minimal number of steps required by SNGD to arrive from $X_1=0$, to an $\eps$-optimal solution.
Now in order to analyze the probability that SNGD ever arrives at an $\eps$-optimal point, it is sufficient to consider the Markov chain over the lattice
$\{i\eta\}_{i\in \mathcal{Z}}$ with the state $S_0 = i_0 \eta$, as an absorbing state. 
Using Lemma~\ref{lemma:MarkovChain} we conclude that if we start at $X_1 =0$ then the probability that we ever absorb is:
\begin{align*}
P(\exists t>0: X_t \text{ is $\eps$-optimal }\vert X_0=0) & \leq \left(\frac{p}{1-p} \right)^{\frac{1}{\eta}-1}\\
&\leq \left(\frac{1}{4} \right)^{\frac{1}{\eps}-1}\\
&\leq \left( \frac{1}{4} \right)^{9}~,
\end{align*}
where we used $p\leq 0.2$,  a bound of $G=1$ on the gradients of losses; NGD's learning rate $\eta = \eps/G$,  and $\eps\leq 0.1$.
\end{proof}


\section{Experiments}
\label{section:experiments}
A better understanding of how to train deep neural networks is one
of the greatest challenges in current machine learning and
optimization.  Since learning NN (Neural Network) architectures
essentially requires to solve a hard non-convex program, we have
decided to focus our empirical study on this type of tasks.  As a test
case, we train a Neural Network with a single hidden layer of $100$ units over the
MNIST data set. We use a ReLU activation function, and minimize the
square loss. We employ a regularization over weights with a parameter
of $\lambda = 5\cdot 10^{-4}$.
 
At first we were interested in comparing the performance of SNGD to
MSGD (Minibatch Stochastic Gradient Descent), and to a stochastic
variant of Nesterov's accelerated gradient
method~\cite{sutskever2013importance}, which is considered to be
state-of-the-art.  For MSGD and Nesterov's method we used a step size
rule of the form $\eta_t = \eta_0(1+\gamma t)^{-3/4}$, with $\eta_0 =
0.01$ and $\gamma =10^{-4}$.  For SNGD we used the constant step size
of $0.1$.  In Nesterov's method we used a momentum of $0.95$.  The
comparison appears in Figures~\ref{fig:Comp1},\ref{fig:Comp2}. As expected, MSGD
converges relatively slowly. Conversely, the performance of
SNGD is comparable with Nesterov's method. All methods employed a minibatch size
of 100.
  
Later, we were interested in examining the effect of minibatch size on the performance of SNGD.
We employed SNGD with different minibatch sizes. As seen in Figure~\ref{fig:Minibatch}, the performance improves significantly with the increase of minibatch size.
%

\begin{figure}[t]
\centering
\subfigure[]{ \label{fig:Comp1}
\includegraphics[trim = 17mm 70mm 27mm 60mm, clip,width=0.3\textwidth ]{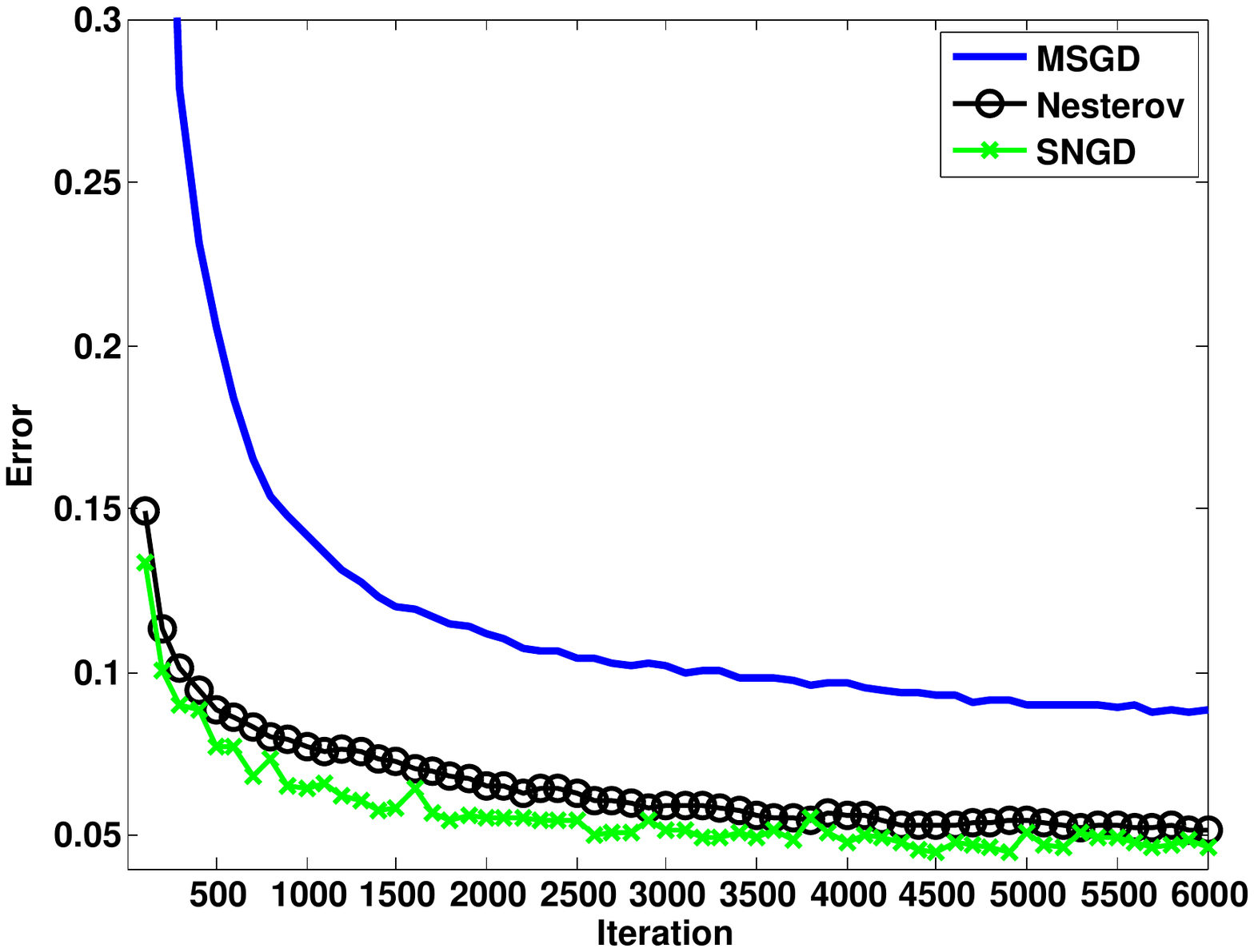}}
\subfigure[]{\label{fig:Comp2}
 \includegraphics[trim = 15mm 70mm 27mm 60mm, clip,width=0.3\textwidth ]{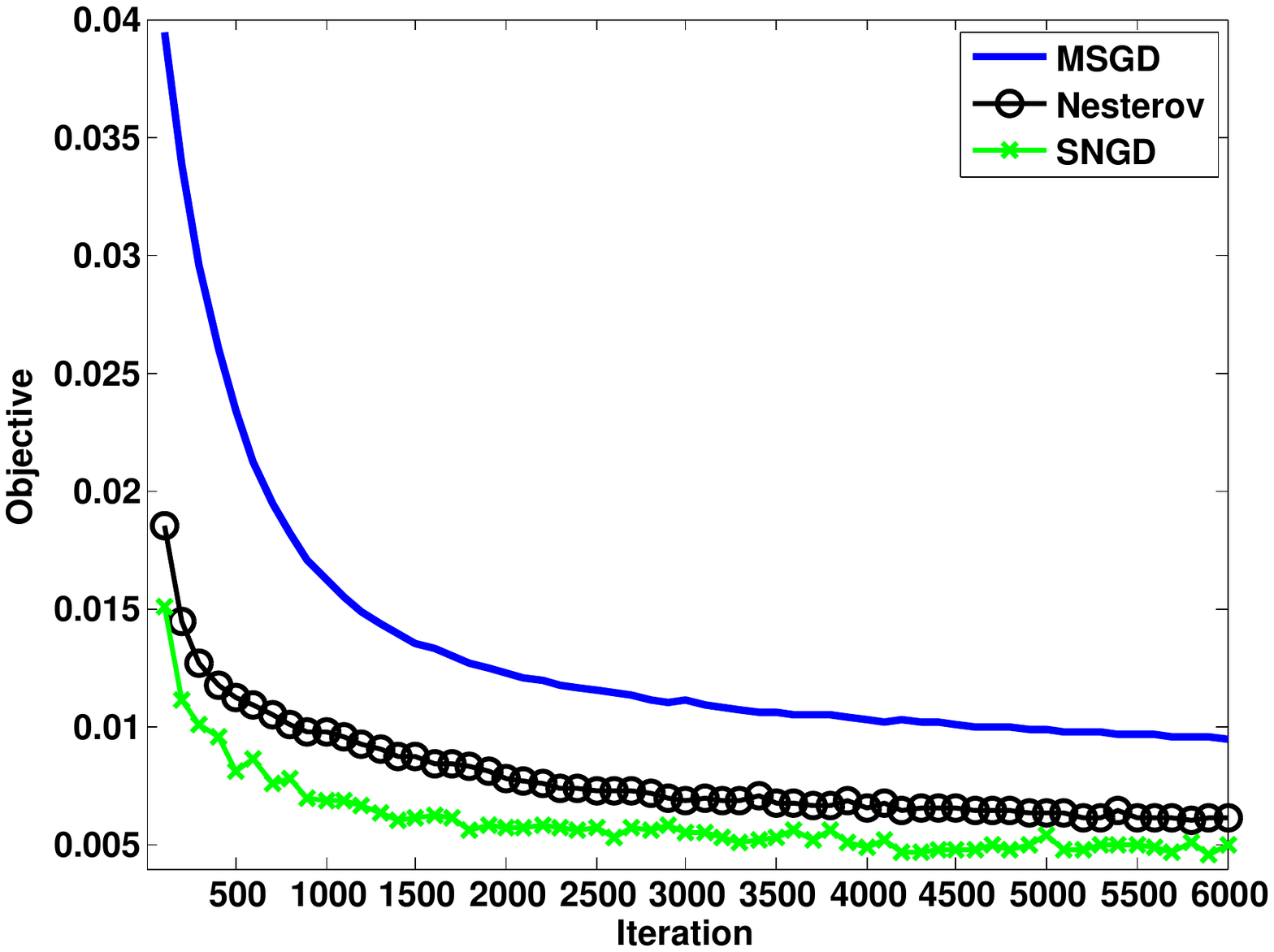}}
 \subfigure[]{ \label{fig:Minibatch}
\includegraphics[trim = 15mm 70mm 27mm 60mm, clip,width=0.3\textwidth ]{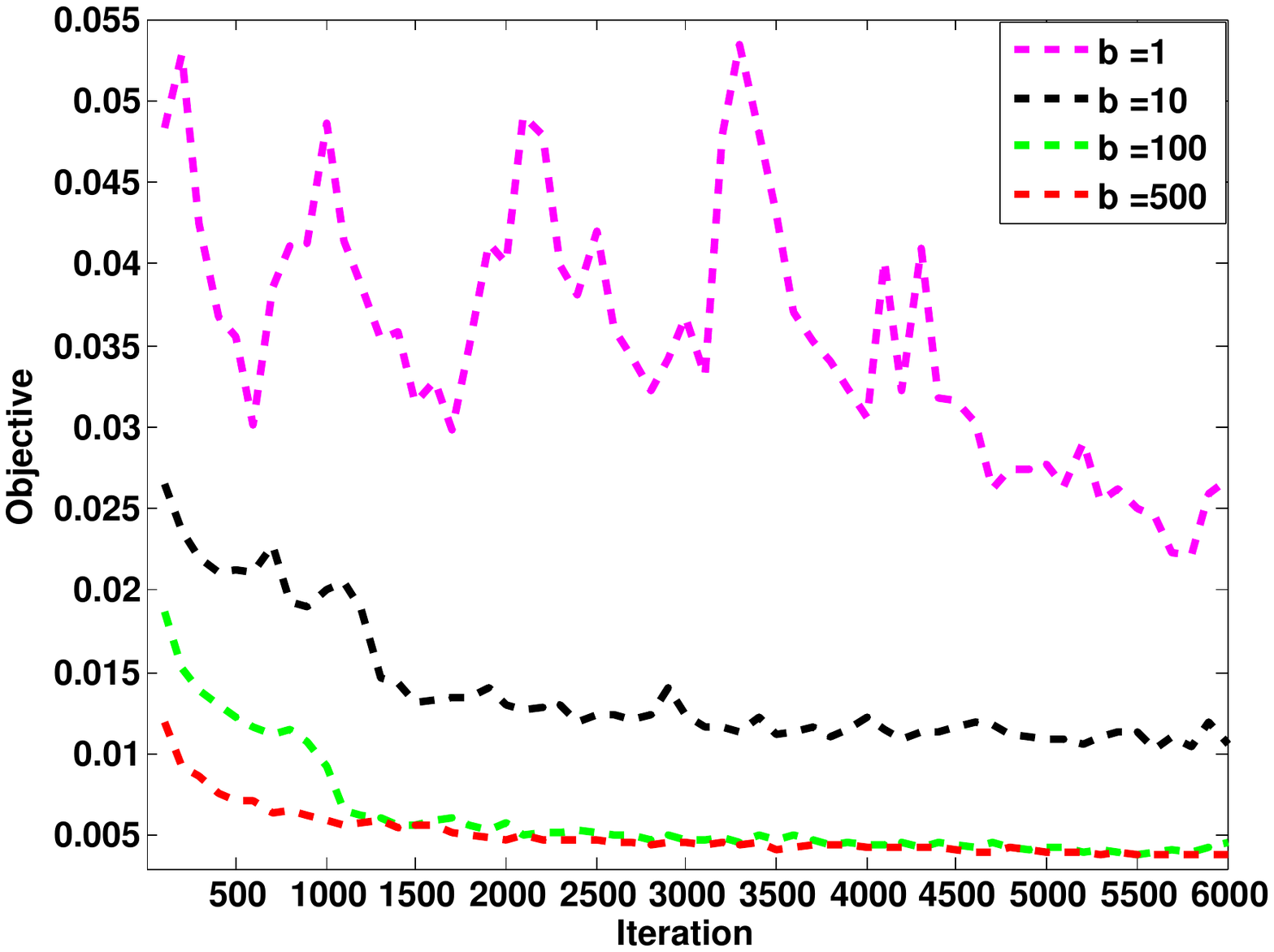}}
\caption{Comparison between optimizations schemes. Left: test
  error. Middle: objective value (on training set). On the Right we compare the objective of SNGD for different minibatch sizes.} 
\end{figure}

\section{Discussion}

We have presented the first provable gradient-based algorithm for stochastic quasi-convex  optimization. This is a first attempt at generalizing the well-developed machinery of stochastic convex optimization to the challenging non-convex problems facing machine learning, and better characterizing the border between NP-hard non-convex optimization and tractable cases such as the ones studied herein. 

Amongst the numerous  challenging questions that remain, we note that there is a gap between the upper and lower bound of the minibatch size sufficient for SNGD to provably converge. 

\bibliographystyle{abbrvnat}
\bibliography{bib}

\newpage
\appendix

\section{Equivalence Between Definitions~\ref{def:qc1} and~\ref{def:qc2}} \label{apdx:defEquiv}
First let us show that~\ref{def:qc1} $\Rightarrow$~\ref{def:qc2}
\begin{proof}[Proof of~\ref{def:qc1} $\Rightarrow$~\ref{def:qc2}]
Let $\x,\y\in \reals^d$ such that $f(\x),f(\y)\leq \lambda$. Let $[\x,\y]$ be the line segment connecting these points; we need to show that $\forall \z\in[\x,\y];\;f(\z)\leq \lambda$. Assume by contradiction that there exists $\z\in[\x,\y]$ such that $f(\z)>\lambda$. Assume w.l.o.g. that $\inner{ \nabla f(\z),\x-\y} \neq 0$ (otherwise we could always find $\z'\in[\x,\y]$ such that  $f(\z')=f(\z)$ and $\inner{ \nabla f(\z'),\x-\y} \neq 0$), and let $\alpha\in(0,1)$ such that $\z=\alpha  \x+(1-\alpha)\y$. By Definition~\ref{def:qc1} the following applies:
\begin{align*}
0&\leq \inner{ \nabla f(\z),\z-\x} = \inner{ \nabla f(\z),\alpha \x+(1-\alpha)\y-\x} = (1-\alpha) \inner{ \nabla f(\z),\y-\x}~,\\
0&\leq \inner{ \nabla f(\z),\z-\y} = \inner{ \nabla f(\z),\alpha \x+(1-\alpha)\y-\y} = -\alpha \inner{ \nabla f(\z),\y-\x}~. 
\end{align*}   
Since $\alpha\in(0,1)$, we conclude that $\inner{ \nabla f(\z),\y-\x}\geq 0$ and also $\inner{ \nabla f(\z),\y-\x}\leq 0$. This is a contradiction since we assumed $\inner{ \nabla f(\z),\y-\x}\neq 0$.
\end{proof}

Let us now show that~\ref{def:qc2} $\Rightarrow$~\ref{def:qc1}
\begin{proof}[Proof of~\ref{def:qc2} $\Rightarrow$~\ref{def:qc1}]
Consider the 1 dimensional function $h(\alpha) = f(\x +
\alpha(\y-\x))$. The derivative of $h$ at $0$ is $h'(0) = \inner{\nabla
  f(\x),\y-\x}$. Therefore, we need to show that if $\y \in S_f(\x)$ then
$h'(0) \le 0$. By the quasi-convex assumption we have that all the
line segment connecting $\x$ to $\y$ is in $S_f(\x)$. Therefore, for every $\alpha \in [0,1]$
we have $h(\alpha) \le h(0)$. This means that 
\[
h'(0) = \lim_{\alpha \to_+ 0} \frac{h(\alpha)-h(0)}{\alpha} \le 0 ~.
\]
\end{proof}  
\section{Local Quasi-convexity of $g$ }\label{app:localQuasi_g}
Here we show that the function $g$ that appears in Equation~\ref{eq:FuncExample} is SLQC. 
Denote $\x^*= (-10,-10)$, let $\eps\in[0,1]$ and let $\x,\y$ such that $g(\x)-g(\x^*)\geq \eps $ and $\|\y-\x^*\|\leq \eps$.
In order to prove SLQC it sufficient to show that $\|g_\x\|>0$, and that $\inner{g_\x,\x-\y}\geq 0$ (we denote $g_\x:=\nabla g(\x)$). 
Deriving $g$ at $x$ we have:
$$g_\x = \nabla g(\x) = (e^{-x_1}/(1+e^{-x_1})^2,e^{-x_2}/(1+e^{-x_2})^2)~.$$
and it is clear that $\|g_\x\|>0,\; \forall \x\in[-10,10]^2$, thus strictness always holds. We divide the proof of $\inner{g_\x,\x-\y}\geq 0$, into cases:
\paragraph{Case 1:} Suppose that $x_1\leq 0,x_2\leq 0$. In this case it is possible to show that the Hessian of $g$ is positive-semi-definite, thus $g$ is \emph{convex} in $[-10,0]^2$. 
Since $g$ is also $1$-Lipschitz, then it  implies that it is $(\eps,1,\x^*)$-SLQC in every $\x\in[-10,0]^2$.
\paragraph{Case 2:} Suppose that  at least one of $x_1,x_2$ is positive, w.l.o.g. assume that $x_1>0$. In this case:
\begin{align*}
\inner{g_\x,\x-\y} 
&\geq
      \sum_{i=1}^2 \frac{e^{-x_i}(x_i+10+(x_i^*-y_i))}{(1+e^{-x_i})^2} \\
&\geq
  \sum_{i=1}^2 \frac{e^{-x_i}(x_i+10-\eps)}{(1+e^{-x_i})^2} \\
&\geq
   \frac{19e^{-10}}{(1+e^{-10})^2}  -\frac{\eps e^{-10+\eps}}{(1+e^{-10+\eps})^2}\\
&\geq
   \frac{e^{-10}}{(1+e^{-10})^2}(19 - \eps e^{\eps}) \\
&>
0~.
\end{align*}
where in the second line we used $\|\y-\x^*\|\leq \eps$. In the third line we used $\eps\in[0,1]$, also $10 = \arg\max_{z\in[0,10]}\frac{e^{-z}(z+9)}{(1+e^{-z})^2}$,
and $\min_{z\in[-10,0]} \frac{e^{-z}(z+10-\eps)}{(1+e^{-z})^2} \geq  -\frac{\eps e^{-10+\eps}}{(1+e^{-10+\eps})^2}$. The fourth line uses $e^{-10+\eps}\geq e^{-10}$, 
and the last line uses $\eps\leq 1$. 

The above two cases establish that $g$ is $(\eps,1,\x^*)$-SLQC in every  $\x\in [-10,10]^2 $,  $\eps\in(0,1]$.

\section{Proof of Lemma~\ref{lem:IdealizedPercept}} \label{app:Proof_lem:IdealizedPercept}
\begin{proof}
Given $\eps\geq 0$, we will show that $\err_m$ is $(\eps,e^W,\w^*)$-SLQC at every $\w\in\B(0,W)$.
Recall  $\phi(z) = (1+e^{-z })^{-1}$, and
consider $\|\w\|\leq W$ such that $\err_m(\w) = \frac{1}{m}\sum_{i=1}^m(y_i - \phi\inner{\w,\x_i})^2\geq \eps$.
Also  let $\v$ be  a point $\eps/e^W$ close to the minima $\w^*$, we therefore have:
\begin{align}\label{eq:app1} 
\inner{\nabla & \err_m(\w),\w-\v} \nonumber \\
&= \frac{2}{m}\sum_{i=1}^m\frac{e^{\inner{\w,\x_i} } } {(1+e^{\inner{\w,\x_i}})^2}( \phi\inner{\w,\x_i}-y_i)(\inner{\w,\x_i}-  \inner{\v,\x_i})\nonumber\\
&= \frac{2}{m}\sum_{i=1}^m\frac{e^{\inner{\w,\x_i} } } {(1+e^{\inner{\w,\x_i}})^2}( \phi\inner{\w,\x_i}- \phi\inner{\w^*,\x_i})(\inner{\w,\x_i}-  \inner{\w^*,\x_i}+\inner{\w^*-\v,\x_i})\nonumber\\
& \geq \frac{2}{m}\sum_{i=1}^m \frac{4e^{\inner{\w,\x_i} } } {(1+e^{\inner{\w,\x_i}})^2}(\phi\inner{\w,\x_i}- \phi\inner{\w^*,\x_i})^2-\frac{\eps e^{-W}}{2}\nonumber\\
& \geq 2e^{-W}\err_m(\w)-\frac{\eps e^{-W}}{2}\nonumber\\
&> 0~.
\end{align}
In the second line we used $y_i = \phi\inner{\w^*,\x_i}$, which holds for the idealized setup. In the third line we used the fact that $\phi(z) $ is monotonically increasing and 
$1/4$-Lipschitz, and therefore $\left(\phi(z)-\phi(z')\right)(z-z')\geq 4\left(\phi(z)-\phi(z')\right)^2$. We also used $|\inner{\w^*-\v,\x_i}|\leq \|\w^*-\v\|\cdot \| \x_i\|\leq \eps e^{-W}$, and 
$|\phi(\w,\x_i)- \phi(\w^*,\x_i)|\leq 1$; Finally we used $\max_z \frac{e^{z } } {(1+e^{z})^2}\leq 1/4$. The fourth line uses $\min_{\|z\|\leq W}\frac{4e^z}{(1+e^z)^2}\geq e^{-W}$. 
The last line follows since we assume $\err_m(\w)\geq \eps$. 
The strictness is immediate since $\inner{\nabla \err_m(\w),\w-\v}>0$, therefore, the above establishes SLQC.
\paragraph{We will now show that $\err_m$ is generally not quasi-convex:} Consider the idealized setup with two samples $(\x_1,y_1),(\x_2,y_2)$ where $\x_1 = (0,-\log4),\x_2=(-\log 4,0)$ and 
$y_1=y_2=1/5$, The error function is therefore:
$$\err_m(\w) = \frac{1}{2}\left(\frac{1}{5}- \frac{1}{1+e^{-\inner{\w,\x_1}}}\right)^2+\frac{1}{2}\left(\frac{1}{5}- \frac{1}{1+e^{-\inner{\w,\x_2}}}\right)^2~.$$
and it can be verified that the optimal predictor is $\w^* = (1,1)$, yielding $\err_m(\w^*)=0$.
Now let $\w_1 = (3,1), \w_2=(1,3)$, it can be shown that $\err_m(\w_1) = \err_m(\w_2) \leq 0.018 $, yet $\err_m(\w_1/2+\w_2/2) \geq 0.019 $. Thus $\err_m$ is not quasi-convex.
\end{proof}

\section{Proof of Lemma~\ref{lem:IdealizedPerceptNoisy}} \label{app:Proof_lem:IdealizedPerceptNoisy}
\begin{proof}
Since we are in the noisy idealized setup, and $\forall i,\;y_i\in[0,1]$  the folllowing holds 
$$y_i = \phi\inner{\w^*,\x_i} + \xi_i~.$$
where  $\{\xi_i\}_{i=1}^m$ are zero mean, independent and bound random variables,  $\forall i\in[m],\;|\xi_i|\leq 1$. Therefore $\err_m$ can be written as follows:
\begin{align*}
\err_m(\w)&= \frac{1}{m}\sum_{i=1}^m(y_i - \phi\inner{\w,\x_i})^2\\
&=  \frac{1}{m}\sum_{i=1}^m(\phi\inner{\w^*,\x_i} - \phi\inner{\w,\x_i})^2 + \frac{1}{m}\sum_{i=1}^m \xi_i\theta_i(\w)+c~.
\end{align*}
where $c=\frac{1}{m}\sum_{i=1}^m \xi_i^2$, and $\theta_i(\w) =2\left(  \phi\inner{\w,\x_i}-\phi\inner{\w^*,\x_i}\right)$. We therefore have:
\begin{align*}
\err_m(\w)-\err_m(\w^*)& =  \frac{1}{m}\sum_{i=1}^m(\phi\inner{\w^*,\x_i} - \phi\inner{\w,\x_i})^2 + \frac{1}{m}\sum_{i=1}^m \xi_i\theta_i(\w)~.
\end{align*}
Now fix $\eps>0$, and let $\w$ be a fixed point in $\B(0,W)$ such that $\err_m(\w)-\err_m(\w^*) \geq  \eps$. Also  let $\v$ be  a point $\eps/ e^{W}$ close to the minima $\w^*$, we therefore have:
\begin{align}\label{eq:app2}
\inner{\nabla &\err_m(\w),\w-\v}  \nonumber\\
&= \frac{2}{m}\sum_{i=1}^m\frac{e^{\inner{\w,\x_i} } } {(1+e^{\inner{\w,\x_i}})^2}( \phi\inner{\w,\x_i}-y_i)(\inner{\w,\x_i}-  \inner{\v,\x_i})\nonumber\\
&= \frac{2}{m}\sum_{i=1}^m\frac{e^{\inner{\w,\x_i} } } {(1+e^{\inner{\w,\x_i}})^2}( \phi\inner{\w,\x_i}- \phi\inner{\w^*,\x_i}-\xi_i )(\inner{\w,\x_i}-  \inner{\w^*,\x_i}+\inner{\w^*-\v,\x_i})\nonumber\\
& \geq \frac{2}{m}\sum_{i=1}^m \frac{4e^{\inner{\w,\x_i} } } {(1+e^{\inner{\w,\x_i}})^2}(\phi\inner{\w,\x_i}- \phi\inner{\w^*,\x_i})^2-\frac{\eps e^{-W}}{2}+ \frac{1}{m}\sum_{i=1}^m \xi_i \lambda_i(\w)\nonumber\\
& \geq \frac{2}{m}\sum_{i=1}^m \frac{4e^{\inner{\w,\x_i} } } {(1+e^{\inner{\w,\x_i}})^2}\big[(\phi\inner{\w,\x_i}- \phi\inner{\w^*,\x_i})^2+\xi_i\theta_i(\w)\big]-\frac{\eps e^{-W}}{2}
+ \frac{1}{m}\sum_{i=1}^m \xi_i \tilde{\lambda}_i(\w)\nonumber\\
& \geq 2e^{-W}\left( \err_m(\w)-\err_m(\w^*)\right)-\frac{\eps e^{-W}}{2}+ \frac{1}{m}\sum_{i=1}^m \xi_i \tilde{\lambda}_i(\w)\nonumber\\
&\geq \frac{3}{2}\eps e^{-W} + \frac{1}{m}\sum_{i=1}^m \xi_i \tilde{\lambda}_i(\w)~.
\end{align}
where we denote $\lambda_i(\w) = \frac{8e^{\inner{\w,\x_i} } } {(1+e^{\inner{\w,\x_i}})^2}( \inner{\w^*,\x_i}-\inner{\w,\x_i})$, and $\tilde{\lambda_i}(\w) = \lambda_i(\w) - \frac{8e^{\inner{\w,\x_i} } } {(1+e^{\inner{\w,\x_i}})^2}\theta_i(\w) $. The argumentation justifying the above inequalities is the same as is 
done for Equation~\eqref{eq:app1} (see Appendix~\ref{app:Proof_lem:IdealizedPercept}). 
According to Equation~\eqref{eq:app2},  the lemma is established if we can show that 
$$\frac{1}{m}\sum_{i=1}^m \xi_i \tilde{\lambda}_i(\w) \geq -\eps e^{-W}$$
The  $\{\xi_i\}_{i=1}^m$ are zero mean and independent, and $ |\xi_i \tilde{\lambda}_i(\w)|\leq 4(W+1)$, thus applying Heoffding's bound we get that the above does hold for  
$m\geq \frac{8e^{2W}(W+1)^2}{\eps^2} \log(1/\delta)$. Note that in bounding $|\xi_i \tilde{\lambda}_i(\w)|$, we used $|\xi_i|\leq 1$, also  $\w,\w^*\in \B(0,W)$, and  $\max_z \frac{e^z}{(1+e^z)^2}\leq \frac{1}{4}$.
\end{proof}

\section{Locally-Lipschitz and Strictly Quasi-Convex are SLQC  }~\label{app:LocallyLip2LocallyQuas}
In order to show that strictly quasi-convex function which is also $(G,\eps/G,\x^*)$-Lipschitz, is SLQC, we require the following lemma:
\begin{lemma}\label{lem:qc1OLD}
Let $\z \in\reals^d$, and assume that $f$ is $(G,\eps/G,\z)$-Locally-Lipschitz. Then, for every $\x$  with $f(\x) - f(\z) > \eps$ we have
  $B(\z,\eps/G) \subseteq S_f(\x)$
\end{lemma}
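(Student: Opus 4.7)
The proof should be essentially immediate from the definition of local-Lipschitzness, so the plan is just to unpack the relevant inequalities carefully.

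First I would fix an arbitrary $\y \in B(\z, \eps/G)$ and aim to show $f(\y) \le f(\x)$, which is exactly the condition for $\y \in S_f(\x)$. Since both $\y$ and $\z$ lie in the ball $B_d(\z, \eps/G)$ (trivially for $\z$, by assumption for $\y$), the $(G, \eps/G, \z)$-local-Lipschitz hypothesis from Definition~\ref{def:localSmoothness} applies and yields
\[
|f(\y) - f(\z)| \;\le\; G \|\y - \z\| \;\le\; G \cdot \frac{\eps}{G} \;=\; \eps.
\]

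Next I would chain this with the hypothesis $f(\x) - f(\z) > \eps$: we obtain
\[
f(\y) \;\le\; f(\z) + \eps \;<\; f(\z) + \bigl(f(\x) - f(\z)\bigr) \;=\; f(\x),
\]
so $\y \in S_f(\x)$. Since $\y$ was an arbitrary point of $B(\z, \eps/G)$, this gives $B(\z, \eps/G) \subseteq S_f(\x)$, as claimed.

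There is no real obstacle here; the only subtlety to flag is checking that the two points we feed into the Lipschitz inequality, namely $\y$ and $\z$, both sit inside the ball of local-Lipschitzness $B_d(\z, \eps/G)$, so that the hypothesis is actually applicable. After that it is one triangle-type manipulation and the strict inequality $f(\x) - f(\z) > \eps$ (as opposed to $\ge \eps$) gives the strict final inequality that puts $\y$ in the sublevel set.
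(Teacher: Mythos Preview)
Your proof is correct and follows exactly the same approach as the paper: apply local-Lipschitzness to bound $f(\y)\le f(\z)+\eps$ for any $\y\in B(\z,\eps/G)$, then combine with $f(\z)+\eps<f(\x)$ to conclude $\y\in S_f(\x)$. The only difference is that you spell out the intermediate inequality $|f(\y)-f(\z)|\le G\|\y-\z\|$ and the check that both $\y,\z$ lie in the ball, which the paper leaves implicit.
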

\begin{proof}
Recall the  notation $S_f(\x)=\{\y:f(\y)\leq f(\x)\}$. By Lipschitzness, for every $\y \in B(\z,\eps/G)$ we have
$f(\y) \le f(\z) + \eps$. Combining with the assumption that $f(\z)+\eps < f(\x)$
we obtain that $\y \in S_f(\x)$. 
\end{proof}
Therefore, if $f(\x)-f(\x^*)\geq \eps$, then $\forall y \in \B(\x^*,\eps/G)$ it holds that $f(\x)-f(\y)\geq 0$, 
and since $f$ is strictly quasi-convex, the latter means that $\inner{\nabla f(\x),\y-\x}\leq 0$, and $\|\nabla f(\x)\|>0$. Thus $(\eps,G,\x^*)$-SLQC is established.

\section{Proof of Theorem~\ref{theorem:NGDSmooth}}
\label{app:proofNGDsmooth}
The key lemma, that enables us to attain faster rates for smooth functions is the following:
\begin{lemma}\label{lem:qc1Smooth}
Let $\x^*$ be a global minima of $f$.
Also assume that $f$ is   $(\beta,\sqrt{2\eps/\beta},\x^*)$-locally-smooth. Then, for every $\x$  with $f(\x) - f(\x^*) > \eps$ we have
  $B(\x^*,\sqrt{2\eps/\beta}) \subseteq S_f(\x)$.
\end{lemma}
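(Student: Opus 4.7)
The plan is to exploit that $\x^*$ is a global minimum, which (combined with differentiability) forces $\nabla f(\x^*)=0$, and then to quadratically bound the gap $f(\y)-f(\x^*)$ on the ball via the local-smoothness inequality. Once the quadratic bound is established, the radius $\sqrt{2\eps/\beta}$ is chosen precisely so that the bound collapses to $\eps$, which is strictly less than $f(\x)-f(\x^*)$ by hypothesis.

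Concretely, I would fix an arbitrary $\y\in\B(\x^*,\sqrt{2\eps/\beta})$ and invoke the definition of $(\beta,\sqrt{2\eps/\beta},\x^*)$-local-smoothness on the pair $(\x^*,\y)$ — both points lie in $\B_d(\x^*,\sqrt{2\eps/\beta})$, so the definition applies. Playing the role of ``$\y$'' in Definition~\ref{def:localSmoothness} with the base point being $\x^*$, one gets
\[
|f(\x^*)-f(\y)-\inner{\nabla f(\x^*),\y-\x^*}|\le \tfrac{\beta}{2}\|\y-\x^*\|^2.
\]
Since $f$ is differentiable and $\x^*$ is a global minimizer, $\nabla f(\x^*)=\mathbf{0}$, so the inner-product term vanishes and the inequality simplifies to $f(\y)-f(\x^*)\le \tfrac{\beta}{2}\|\y-\x^*\|^2$.

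Using the assumption $\|\y-\x^*\|\le \sqrt{2\eps/\beta}$, this yields $f(\y)-f(\x^*)\le\eps$. Combined with the hypothesis $f(\x)-f(\x^*)>\eps$, I get $f(\y)<f(\x)$, i.e.\ $\y\in S_f(\x)$. Since $\y$ was arbitrary in $\B(\x^*,\sqrt{2\eps/\beta})$, the desired inclusion $\B(\x^*,\sqrt{2\eps/\beta})\subseteq S_f(\x)$ follows.

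There is essentially no obstacle here; the only subtle point is a bookkeeping one, namely that the smoothness definition in this paper places the gradient at the second argument $\y$ rather than the more standard first argument, so one has to be careful about which of the two points plays which role — choosing $\x^*$ to play the role of ``$\y$'' (so that the $\nabla f(\cdot)$ sits at $\x^*$ and can be annihilated by first-order optimality) is what makes the estimate clean. After that, the radius $\sqrt{2\eps/\beta}$ is exactly tuned so that $\tfrac{\beta}{2}\|\y-\x^*\|^2\le\eps$, and the proof is a one-line consequence.
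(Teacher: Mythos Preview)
Your proof is correct and is essentially identical to the paper's own argument: both invoke $\nabla f(\x^*)=0$ together with the local-smoothness inequality to get $|f(\y)-f(\x^*)|\le \tfrac{\beta}{2}\|\y-\x^*\|^2\le \eps$ on the ball, and then conclude $\y\in S_f(\x)$ from $f(\x)-f(\x^*)>\eps$. Your extra care about which variable plays which role in Definition~\ref{def:localSmoothness} is a nice clarification, but otherwise there is no difference in approach.
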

\begin{proof}
Combining the definition of local-smoothness (Def.~\ref{def:localSmoothness}) together with $\nabla f(\x^*)=0$ we get
$$|f(\y)-f(\x^*)|\leq \frac{\beta}{2}\|\y-\x^*\|^2,\quad \forall \y\in \B(\x^*,\sqrt{2\eps/\beta})$$
Therefore, for every $\y \in B(\x^*,\sqrt{2\eps/\beta})$ we have
$f(\y) \le f(\x^*) + \eps$. Combining with the assumption that $f(\x^*)+\eps < f(\x)$
we obtain that $\y \in S_f(\x)$. 
\end{proof}
The proof of  Theorem~\ref{theorem:NGDSmooth} follows the same lines as the proof of Theorem~\ref{theorem:NGD}.
The main difference is that whenever $f(\x_t) - f(\x^*)\geq \eps$, we use Lemma~\ref{lem:qc1Smooth} and quasi-convexity to show that for $\y = \x^* + \sqrt{2\eps/\beta} \hat{g}_t$ it follows that
$$\inner{\nabla f(\x_t), \y - \x_t}\leq 0~. $$ We  therefore omit the details of the proof.


\section{Proof of Lemma~\ref{lemma:MarkovChain}}\label{app:Proof_lemma:MarkovChain}
\begin{proof}
Using the stationarity and Markov property of the chain, we can write the following recursive equations for the absorb probabilities:
\begin{align}
&\alpha_i = (1-p)\alpha_{i+1}+p \alpha_{i-1}, \qquad \forall i>1 \label{equation:ChainRecursive}\\
&\alpha_{1} = (1-p)\alpha_{2}+p \label{equation:ChainBase}
\end{align}
Lets guess a solution of the form, $\alpha_i =c_0 \rho^i$, where $\rho$ is the decay parameter of the absorb probabilities. By inserting this solution into equation~\eqref{equation:ChainRecursive} we get an equation for $\rho$:
$$(1-p)\rho^2-\rho+p=0~.$$
And it can be validated that the only nontrivial solution is $\rho = \frac{p}{1-p}$, using the latter $\rho$ in equation~\eqref{equation:ChainBase} we get $c_0=1$, and therefore we conclude that:
$$\alpha_i = \big(\frac{p}{1-p}\big)^i,\qquad \forall i\geq 1$$
\end{proof}
\section{A Broader Notion of Local-Quasi-Convexity}~\label{app:def:LocalQC}
Definition~\ref{def:LocalQC} describes a rich family of function, as depicted in Section~\ref{sec:IdealizedPercept}, and~\ref{sec:IdealizedPerceptNoisy}. However, it is clear that it does not capture piecewise constant and quasi-convex functions, such as the zero-one loss, or the Perceptron problem.

In some cases, e.g. the Perceptron problem, we may have an access to a \emph{direction oracle}, $\G:\reals^d\mapsto\reals^d$. This oracle is a proxy to the gradient, aiming us in a global ascent (descent) direction.
Following is a broader definition of locally-quasi-convex functions:
\begin{definition}\textbf{(Local-Quasi-Convexity)}\label{def:LocalQC_broad}
Let $\x,\z\in \reals^d$,  $\kappa,\eps>0$. Also let $\G:\reals^d\mapsto\reals^d$.
 We say that  $f: \reals^d \mapsto \reals$ is  $(\eps,\kappa,\z)$-Strictly-Locally-Quasi-Convex (SLQC) in $\x$, with respect
 to the direction oracle $\G$, if at least one of the following applies:
\begin{enumerate}
\item $f(\x)- f(\z)\leq  \eps~.$
\item $\|\G(\x)\|> 0$, and for every $\y\in\B(\z,\eps/\kappa)$ it holds that $\inner{\G(\x), \y-\x}\leq 0~.$
\end{enumerate}
\end{definition}
Thus, Definition~\ref{def:LocalQC}, is a private case of the above, which takes the gradient of $f$ to be the direction oracle. 
Note that we can show that NGD/SNGD  and their guarantees still hold for SLQC functions with a direction oracle.
The algorithms and proofs are very similar to the ones that appear in the paper, and we therefore omit the details.

In the following section we illustrate a scenario that fits the above definition.
\subsection{The $\gamma$-margin Perceptron}\label{sec:IdealizedPerceptBroad}
In this setup we have a collection of $m$ samples $\{(\x_i,y_i) \}_{i=1}^m \in \B_d\times \{0,1\}$
,and we are guaranteed to have  $\w^* \in \reals^d$ such that: $ y_i \inner{\w^*,\x_i}\geq \gamma,\; \forall i\in[m]$.
Thus, using the sign of  $\inner{\w^*,x_i}$ as a predictor, it classifies all the points correctly (with a margin of $\gamma$).

Letting $\phi$ be the zero-one loss $\phi(z) = \ind{z\geq0}$, we measure 
the performance of a predictor $\w\in\reals^d$, by the average (square) error over all samples,
\begin{align}\label{eq:ErrPerceptronBroad}
\err_m(\w) = \frac{1}{m}\sum_{i=1}^m \left(y_i-\phi\inner{\w,\x_i}\right)^2~. 
\end{align}
Clearly, the gradients of $\err_m(\w)$ vanish almost everywhere. Luckily, from the convergence analysis 
of the Perceptron (see e.g.~\cite{Kalai}), we know the following to be a  direction oracle for  $\err_m(\w)$:
\begin{align}\label{eq:DirectiveOracle}
\G(\w) = \frac{1}{m}\sum_{i=1}^m \left(\phi\inner{\w,\x_i}-y_i\right)\x_i~. 
\end{align}
  
The next lemma states that in the above setup, the error function is SLQC with respect to $\G$. This implies that Algorithm~\ref{algorithm:NGD} finds an $\eps$-optimal minima of $\err_m(\w)$ within $\poly(1/\eps)$ iterations. 
\begin{lemma}\label{lem:IdealizedPerceptBroad}
Consider the $\gamma$-margin Perceptron problem. Then the error function appearing in Equation~\eqref{eq:ErrPerceptronBroad} is $(\eps,2/\gamma,\w^* )$-SLQC in $\w$, $\forall \eps\in(0,1),\;\forall \w \in\reals^d$ 
with respect to the direction oracle appearing in Equation~\eqref{eq:DirectiveOracle}.
\end{lemma}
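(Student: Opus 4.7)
The plan is to exploit the fact that $\phi\inner{\w,\x_i}-y_i$ is supported only on the misclassified examples, and on those it equals $-\tilde{y}_i$ where $\tilde{y}_i := 2y_i-1 \in \{-1,+1\}$ is the signed label for which the margin condition $\tilde{y}_i\inner{\w^*,\x_i}\ge \gamma$ holds. Writing $M(\w)$ for the set of misclassified indices, I would therefore first rewrite
\[
\G(\w)=-\frac{1}{m}\sum_{i\in M(\w)}\tilde{y}_i\,\x_i,
\qquad
\err_m(\w)=\frac{|M(\w)|}{m}.
\]
Fix $\w$ with $\err_m(\w)>\eps$; the goal is to show (i) $\|\G(\w)\|>0$ and (ii) $\inner{\G(\w),\y-\w}\le 0$ for every $\y\in\B(\w^*,\eps\gamma/2)$.

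For (ii), I would split $\y-\w = (\w^*-\w)+(\y-\w^*)$ and treat each piece separately. The first piece is the ``signal'': using the margin assumption on each $i\in M(\w)$ and the fact that misclassification means $\tilde{y}_i\inner{\w,\x_i}<0$, one gets
\[
\inner{\G(\w),\w^*-\w}
=-\frac{1}{m}\sum_{i\in M(\w)}\tilde{y}_i\inner{\x_i,\w^*}
+\frac{1}{m}\sum_{i\in M(\w)}\tilde{y}_i\inner{\x_i,\w}
\le -\frac{|M(\w)|\gamma}{m}=-\gamma\,\err_m(\w)<-\gamma\eps.
\]
The second piece is the ``perturbation'': by Cauchy--Schwarz, $|\inner{\G(\w),\y-\w^*}|\le \|\G(\w)\|\cdot\tfrac{\eps}{\kappa}$, and since $\|\x_i\|\le 1$ I would use the triangle inequality to get $\|\G(\w)\|\le \err_m(\w)\le 1$. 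With $\kappa=2/\gamma$ this perturbation is at most $\eps\gamma/2$. Combining,
\[
\inner{\G(\w),\y-\w}\le -\gamma\eps+\tfrac{\eps\gamma}{2}=-\tfrac{\eps\gamma}{2}\le 0,
\]
which also gives (i) as a byproduct, since the inner product above is strictly negative and hence $\|\G(\w)\|>0$.

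The main obstacle I anticipate is only a notational one: the statement uses $y_i\in\{0,1\}$ while the margin condition is most naturally phrased with signed labels, so I would begin the proof by introducing $\tilde{y}_i=2y_i-1$ and verifying that the map $(\phi\inner{\w,\x_i}-y_i)\mapsto -\tilde{y}_i\mathbbm{1}\{i\in M(\w)\}$ is correct in all four sign cases. Aside from this bookkeeping the argument is purely the standard Perceptron inner-product analysis, and the choice $\kappa=2/\gamma$ is precisely what balances the margin gain $-\gamma\eps$ against the perturbation $\eps/\kappa$ under the crude bound $\|\G(\w)\|\le 1$.
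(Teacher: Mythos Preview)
Your proof is correct and follows essentially the same approach as the paper's: both decompose $\inner{\G(\w),\y-\w}$ into a signal term bounded via the margin by $-\gamma\,\err_m(\w)$ and a perturbation term bounded by $\eps\gamma/2$ using $\|\x_i\|\le 1$ together with $\err_m(\w)\le 1$. The only difference is cosmetic---you work explicitly with signed labels $\tilde y_i$ and the misclassified set $M(\w)$, whereas the paper packages the same computation as the inequality $(\phi\inner{\w,\x_i}-\phi\inner{\w^*,\x_i})(\inner{\w,\x_i}-\inner{\w^*,\x_i})\ge \gamma(\phi\inner{\w,\x_i}-\phi\inner{\w^*,\x_i})^2$, keeping the form parallel to the sigmoid-GLM lemma.
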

\begin{proof}
Given $\eps\in (0,1)$, we will show that $\err_m$ is $(\eps,2/\gamma,\w^*)$-SLQC at every $\w\in\reals^d$.
Consider $\w\in\reals^d$ such that $\err_m(\w) = \frac{1}{m}\sum_{i=1}^m(y_i - \phi\inner{\w,\x_i})^2\geq \eps$.
Also  let $\v$ be  a point $\gamma \eps/2$ close to the minima $\w^*$, we therefore have:
\begin{align}
\inner{\G(\w)&,\w-\v} \nonumber \\
&= \frac{1}{m}\sum_{i=1}^m( \phi\inner{\w,\x_i}-y_i)(\inner{\w,\x_i}-  \inner{\v,\x_i})\nonumber\\
&= \frac{1}{m}\sum_{i=1}^m( \phi\inner{\w,\x_i}- \phi\inner{\w^*,\x_i})(\inner{\w,\x_i}-  \inner{\w^*,\x_i}+\inner{\w^*-\v,\x_i})\nonumber\\
& \geq \frac{\gamma}{m}\sum_{i=1}^m (\phi\inner{\w,\x_i}- \phi\inner{\w^*,\x_i})^2-\gamma\eps/2 \nonumber\\
&\geq \gamma\eps-\gamma\eps/2 \nonumber\\
&> 0~.
\end{align}
In the second line we used $y_i = \phi\inner{\w^*,\x_i}$, which holds by our assumption on $\w^*$. In the fourth line we used the fact that $( \phi\inner{\w,\x_i}- \phi\inner{\w^*,\x_i})(\inner{\w,\x_i}-  \inner{\w^*,\x_i})\geq \gamma ( \phi\inner{\w,\x_i}- \phi\inner{\w^*,\x_i})^2$, which holds since $\w^*$ is a minimizer with a $\gamma$-margin.
We also used $\err_m(\w)\leq 1$, and $|\inner{\w^*-\v,\x_i}|\leq \|\w^*-\v\|\cdot \| \x_i\|\leq \gamma \eps/2$.
Lastly, we use $\err_m(\w)\geq \eps$.

The strictness is immediate since $\inner{\G(\w),\w-\v}>0$, therefore, the above establishes SLQC.
\end{proof}

\end{document}